\documentclass[twocolumn, journal]{IEEEtran}
\usepackage{graphicx}
\usepackage{epstopdf}
\usepackage{amsthm}
\theoremstyle{definition}
\theoremstyle{assumption}
\usepackage{epsfig}
\usepackage{latexsym}
\usepackage{amsfonts}
\usepackage{here}
\usepackage{rawfonts}
\usepackage[T1]{fontenc}
\usepackage{calc}
\usepackage{capitalgreekitalic}
\usepackage{url}
\usepackage{enumerate}
\usepackage{color}
\usepackage[tbtags]{amsmath}
\usepackage{amssymb}
\usepackage{upref}
\usepackage{epic,eepic}
\usepackage{times}
\usepackage{dsfont}
\usepackage{comment}
\usepackage{cite}
\usepackage{bbm}
\usepackage{bm}
\usepackage{amsmath}
\usepackage{array}
\usepackage{tabularx}
\usepackage{threeparttable}
\usepackage{fancyhdr}

\allowdisplaybreaks[4]

\newcommand{\diag}{\ensuremath{\operatorname{diag}}}

\newtheorem{theorem}{\bf Theorem}

\newtheorem{lemma}{\bf Lemma}

\newtheorem{definition}{\bf Definition}
\newtheorem{remark}{\bf Remark}

\usepackage{dsfont}

\newcounter{step}

\newlength{\totlinewidth}
\newenvironment{algorithm}{%
  \rule{\linewidth}{1pt}
  \begin{list}{}%
    {\usecounter{step}%
      \settowidth{\labelwidth}{\textbf{Step 2:}}%
      \setlength{\leftmargin}{\labelwidth}%
      \setlength{\topsep}{-2pt}%
      \addtolength{\leftmargin}{\labelsep}%
      \addtolength{\leftmargin}{2mm}%
      \setlength{\rightmargin}{2mm}%
      \setlength{\totlinewidth}{\linewidth}%
      \addtolength{\totlinewidth}{\leftmargin}%
      \addtolength{\totlinewidth}{\rightmargin}%
      \setlength{\parsep}{0mm}%
      \raggedright}}%
  {\end{list}%
  \rule{\linewidth}{1pt}}
\newcounter{substep}

  {\end{list}}

\newlength{\aligntop}
\newlength{\alignbot}
 \makeatletter
\renewenvironment{align}{%
  \vspace{\aligntop}
  \start@align\@ne\st@rredfalse\m@ne
}{%
  \math@cr \black@\totwidth@
  \egroup
  \ifingather@
    \restorealignstate@
    \egroup
    \nonumber
    \ifnum0=`{\fi\iffalse}\fi
  \else
    $$%
  \fi
  \ignorespacesafterend%
  \vspace{\alignbot}\par\noindent
} \makeatother

\IEEEoverridecommandlockouts

\usepackage{algorithm}
\usepackage{algorithmic}

\usepackage{subfigure}

\def\L{{\cal L}}

\begin{document}

\title{A Secure and Private Distributed \\ Bayesian Federated Learning Design}

\author{{Nuocheng Yang,} \emph{Student Member, IEEE}, 
{Sihua Wang, Zhaohui Yang, Mingzhe Chen} \emph{Member, IEEE}, \\
{Changchuan Yin}, \emph{Senior Member, IEEE}, and
{Kaibin Huang}, \emph{Fellow, IEEE}.
\thanks{N. Yang, S. Wang, and C. Yin are with the Beijing Laboratory of Advanced Information Network, and the Beijing Key Laboratory of Network System Architecture and Convergence, Beijing University of Posts and Telecommunications, Beijing 100876, China (e-mails: \protect\url{{yangnuocheng, sihuawang, ccyin}@bupt.edu.cn)}.}
\thanks{Z. Yang is with the College of Information Science and Electronic Engineering, Zhejiang University, Hangzhou 310027, China, and Zhejiang Provincial Key Lab of Information Processing, Communication and Networking (IPCAN), Hangzhou 310007, China (e-mail: \protect\url{yang_zhaohui@zju.edu.cn)}.}
\thanks{M. Chen is with the Department of Electrical and Computer Engineering and Institute for Data Science and Computing, University of Miami, Coral Gables, FL, 33146 USA (email: \protect\url{mingzhe.chen@miami.edu}).}
\thanks{K. Huang is with the Department of Electrical and Electronic Engineering, The University of Hong Kong, Hong Kong (e-mail: \protect\url{huangkb@hku.hk)}.}
}

\maketitle

\begin{abstract}
Distributed Federated Learning (DFL) enables decentralized model training across large-scale systems without a central parameter server. However, DFL faces three critical challenges: privacy leakage from honest-but-curious neighbors, slow convergence due to the lack of central coordination, and vulnerability to Byzantine adversaries aiming to degrade model accuracy. To address these issues, we propose a novel DFL framework that integrates Byzantine robustness, privacy preservation, and convergence acceleration. Within this framework, each device trains a local model using a Bayesian approach and independently selects an optimal subset of neighbors for posterior exchange. We formulate this neighbor selection as an optimization problem to minimize the global loss function under security and privacy constraints. Solving this problem is challenging because devices only possess partial network information, and the complex coupling between topology, security, and convergence remains unclear. To bridge this gap, we first analytically characterize the trade-offs between dynamic connectivity, Byzantine detection, privacy levels, and convergence speed. Leveraging these insights, we develop a fully distributed Graph Neural Network (GNN)-based Reinforcement Learning (RL) algorithm. This approach enables devices to make autonomous connection decisions based on local observations. Simulation results demonstrate that our method achieves superior robustness and efficiency with significantly lower overhead compared to traditional security and privacy schemes.
\end{abstract}

\begin{IEEEkeywords}
Distributed federated learning, data privacy, Byzantine robust, graph neural network, reinforcement learning.
\end{IEEEkeywords}

\IEEEpeerreviewmaketitle
\section{Introduction}
Federated learning (FL) allows wireless devices to cooperatively train a machine learning (ML) model without local data transmission, which ensures data locally and reduces communication overhead \cite{FLjsac, AJointLearningandCommunicationsFrameworkforFederatedLearningOverWirelessNetworks, RobustFederatedLearning, MaximizingUncertaintyforFederatedLearningviaBayesianOptimizationBasedModelPoisoning}. 
Existing FL frameworks can generally be categorized into two main paradigms: centralized FL (CFL) \cite{M1} and distributed FL (DFL) \cite{CGHS}.
In CFL, each device needs to upload its local model to a parameter server for global model aggregation, which is not always feasible due to limited resources (i.e., energy and bandwidth) in practical wireless networks \cite{M2}. 
On the other hand, DFL enables devices to collaboratively train an ML model by exchanging local FL models with a subset of neighbors through spontaneous device-to-device (D2D) connections, making it suitable for deployment in large-scale systems. 
Moreover, frequent model transmission through D2D connections is also vulnerable to security issues \cite{ByzantineAttack, TrustworthyfederatedlearningSurvey, DataandModelPoisoningBackdoorAttacks}, caused by poisoned model updates from Byzantine adversaries, and privacy issues \cite{DLG, BatchDLG, AIJack}, caused by private data reconstruction.

Recently, a number of existing works such as \cite{TheLimitationsofFederatedLearninginSybilSettings, CONTRA, ToNSec, Ghavamipour2024PrivacyPreservingAF, Decentralizedfederatedlearningthroughproxymodelsharing, SMC, TowardsEfficientandPrivacyPreservingFederatedDeepLearning, ToNDP, ToNHEzkP, FederatedLearningWithDifferentialPrivacy} have studied the security and privacy issues in CFL framework.
For security issues, the authors in \cite{TheLimitationsofFederatedLearninginSybilSettings, CONTRA, Ghavamipour2024PrivacyPreservingAF, ToNSec} detected Byzantine adversaries by comparing Euclidean distance and cosine similarity between poisoned and normal model updates.
Moreover, for privacy issues, the authors in \cite{Decentralizedfederatedlearningthroughproxymodelsharing, SMC, TowardsEfficientandPrivacyPreservingFederatedDeepLearning, ToNDP, ToNHEzkP, FederatedLearningWithDifferentialPrivacy} utilized differential privacy, secure multiparty computation, and homomorphic encryption to ensure that the model updates are invisible to data reconstruction adversaries.
However, jointly considering security and privacy issues is challenging since security-ensuring schemes need accurate model updates for comparison, while privacy-preserving schemes need invisible model updates for confidentiality.
To tackle this problem, the authors in \cite{FLOD, SplitAggregationLightweightPrivacyPreservingFederatedLearning} utilized multiple servers to collect different parts of accurate model updates from all devices while no single server can obtain complete model updates, thus ensuring that model updates are invisible.
However, the methods in \cite{FLOD, SplitAggregationLightweightPrivacyPreservingFederatedLearning} relied on additional centralized servers which are unsuitable for the DFL framework. 
In addition, the authors in \cite{PrivacyEnhancedFederatedLearningAgainstPoisoningAdversaries, PrivacyPreservingByzantineRobustFederatedLearningviaBlockchainSystems} utilized the clustering method to distinguish Byzantine adversaries from normal devices by comparing the distance between their encrypted model updates.
Obviously, only the devices with independent and identically distributed (IID) training datasets can generate the same encrypted model update, which is unrealizable in non-IID scenarios.

Some related works \cite{Trustiness_basedhierarchicaldecentralizedfederatedlearning, Enhancingprivacypreservation} jointly considered the security and privacy issues in the DFL framework.
The authors in \cite{Trustiness_basedhierarchicaldecentralizedfederatedlearning} proposed a hierarchical DFL framework that assigns the devices to different clusters where the devices in each cluster need to transmit its model updates encrypted by CheonKim-Kim-Song (CKKS) fully homomorphic encryption technique to the same device (referred as parameter server) for intra-cluster model aggregation, thus ensuring privacy preserving.
Then, the parameter servers exchange aggregated model updates after decryption with neighboring parameter servers for security inter-cluster model aggregation, and the model updates with poor validation performance would be removed.
However, the parameter server in each cluster suffers from heavy computational and communication overhead, thus causing unfairness and instability in the DFL framework.
Moreover, the parameter server cannot detect the poisoned updates as the model updates are encrypted during the intra-cluster model aggregation, which threatens security.
In addition, DFL would collapse if poisoned updates accounted for the majority as in intra-cluster the models are average aggregated.
Different from \cite{Trustiness_basedhierarchicaldecentralizedfederatedlearning} where the devices transmit their encrypted model updates to the parameter server within the cluster, the devices in \cite{Enhancingprivacypreservation} exchange the encrypted model updates with all available neighbors and share security proof generated by zero-knowledge succinct non-interactive argument of knowledge (zk-SNARK) method to validate the security of the exchanged model updates.
However, generating security proof leads to substantial computational and storage overheads \cite{zkBench}.
Moreover, these prior works \cite{Trustiness_basedhierarchicaldecentralizedfederatedlearning, Enhancingprivacypreservation} suffer from heavy cryptographic computation overhead while ignoring the acceleration of DFL learning and the device in DFL can only obtain partial information (e.g., device connection and model updates) from its neighbors.

To tackle encryption burden, privacy-security conflict, and slow convergence rate, we propose a novel lightweight DFL framework to achieve Byzantine robustness, privacy preserving, and convergence acceleration. 
In particular, the proposed DFL framework enables each device first to train the local model using the Bayesian approach and then independently select a subset of its neighbors to exchange model updates.
Compared to the traditional ML methods that treat model weights as deterministic variables, the Bayesian approach estimates the distribution of model weights by a Maximum a Posteriori (MAP) estimator based on the local prior and training dataset.
In this way, the devices only need to exchange the updated model's posterior with neighbors and integrate the received distribution into the local prior.
Our key contributions are as follows:

\begin{itemize}
    \item{
    We propose a novel DFL framework in which distributed devices independently select a subset of neighbors (i.e., device connection scheme) to exchange the updated posterior based on partial information (e.g., device connection and model updates).
    Particularly, to detect Byzantine adversaries, the proposed framework enables each device to compare the received models with the expected result formulated from the shared model updates in the last round.
    Moreover, to achieve privacy preserving, the proposed framework enables each device to hide critical information from its neighbors through careful device connection design.
    Compared to the traditional algorithms that detect Byzantine adversaries by comparing the differences between exchanged model updates directly or ensuring data privacy by encryption, the proposed framework offers a distinct perspective from the device connection scheme design which controls the exchange of model updates.
    }
    \item{
    We formulate the selection of neighboring devices as an optimization problem whose goal is to minimize DFL training loss while accounting for privacy, security, and recourse constraints.
    To solve this problem, we first analyze how the device connection scheme impacts the accuracy of the local prior approximation (i.e. expected result).
    This, in turn, affects the private data leakage, and the detection of the Byzantine adversaries.
    Then, we analytically characterize the effect of the device connection scheme on the DFL convergence speed.
    Given these analyses, we can 
    design the device connection based on the characteristics of the device connection matrix incorporating privacy, security, and convergence acceleration requirements.
    }
    \item{
    To learn the optimal device connection based on partial information, we construct a novel distributed graph neural network (GNNs) based reinforcement learning (RL) method.
    By employing GNN-based RL method, each device can adapt to the dynamic dimension of its neighbors and generate an optimal device connection scheme independently.
    Numerical evaluation results show that our proposed DFL framework can achieve security, privacy, and convergence acceleration with lightweight and fair overhead compared to the baselines.
    }
\end{itemize}

\textit{Notations:} Unless otherwise indicated, matrices are represented by bold capital letters (i.e. $\bf{A}$), vectors are denoted by bold lowercase letters (i.e. $\boldsymbol{v}$), and scalars are donated by plain font (i.e. $d$). 
The term $||\boldsymbol{w}||$ donates the L2-norm and $|\boldsymbol{w}|$ represents the L1-norm of $\boldsymbol{w}$.

\section{System Model and Problem Formulation}
\begin{figure}[t]
\centering
\includegraphics[width=8cm]{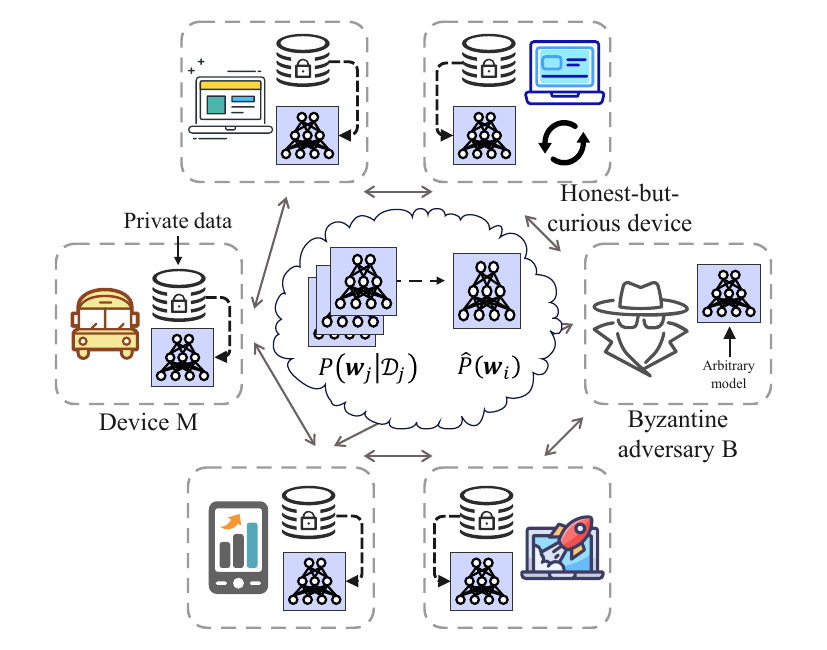}
\vspace{-0.2cm}
\caption{Illustration of the considered DFL model.}
\centering
\vspace{-0.3cm}
\label{fig1}
\end{figure}
Consider a wireless network that consists of a set $\mathcal{R}$ of $R$ mobile devices which can be divided into three categories:
\begin{enumerate}[1)]
    \item \textit{Conventional} device set $\mathcal{M}_1$ with $M_1$ devices first train the local model using the local dataset and then exchange the model updates with a subset of neighbors in $\mathcal{R}$;
    \item \textit{Honest-but-curious} (\textit{HBC}) device set $\mathcal{M}_2$ with $M_2$ devices not only train and exchange model updates (as the conventional devices do) but also attempt to reconstruct private data from other devices; 
    \item \textit{Byzantine adversary} device set $\mathcal{B}$ with $B$ devices transmit randomized model to other devices in $\mathcal{R}$ without local training.
\end{enumerate}

Let $\mathcal{M}=\mathcal{M}_1\cup\mathcal{M}_2$ with $M$ devices be the standard training device set which satisfies $\mathcal{B} \cap \mathcal{M}= \emptyset$ and $\mathcal{B} \cup \mathcal{M} = \mathcal{R}$.

We assume that each device $i$ in $\mathcal{M}$ has a local dataset $\mathcal{D}_i$, containing $N_i$ training data samples with $N=\sum_{i = 1}^{M} N_i$ being the total number of training data samples across devices. 
Each training data sample $n$ consists of an input feature vector $\boldsymbol{x}_{i,n} \in \mathbb{R}^{N_{\textrm{I}}\times 1}$ and a corresponding label vector $\boldsymbol{y}_{i,n} \in \mathbb{R}^{N_{\textrm{O}}\times 1}$.
Then, the DFL problem can be expressed as
\begin{equation}\label{eq:FLloss}
    \mathop {\min }\limits_{\boldsymbol{w}_{1},\cdots,\boldsymbol{w}_{M}} \frac{1}{M} \sum\limits_{i=1}^{M} F\left(\boldsymbol{w}_{i}\right),
    \end{equation}
    \vspace{-0.2cm}
    \begin{align}\label{st1}
    \setlength{\abovedisplayskip}{-40 pt}
    {\rm{s.t.}} \,\,  \boldsymbol{w}_{i} = \boldsymbol{w}_{j}, \forall i,j \in \mathcal{M},\tag{\theequation a}
\end{align} 
where $\boldsymbol{w}_{i} \in \mathbb{R}^{V \times 1}$ is a vector to capture the local FL model on device $i$.
$F\left(\boldsymbol{w}_{i}\right)$ is given by
\begin{equation}\label{eq:LocalLoss}
    F\left(\boldsymbol{w}_{i}\right)=\frac{1}{N_i}\sum\limits_{n=1}^{N_i}f\left(\boldsymbol{w}_{i};\phi\left(\boldsymbol{w}_{i}, \boldsymbol{x}_{i,n}\right),\boldsymbol{y}_{i,n}\right),
\end{equation}
where $\phi\left(\boldsymbol{w}_{i}, \boldsymbol{x}_{i,n}\right)$ denotes the neural network function and $f\left(\boldsymbol{w}_{i};\phi\left(\boldsymbol{w}_{i}, \boldsymbol{x}_{i,n}\right),\boldsymbol{y}_{i,n}\right)$ is the loss function to measure the difference between the output of $\phi\left(\boldsymbol{w}_{i}, \boldsymbol{x}_{i,n}\right)$ and the label $\boldsymbol{y}_{i,n}$. 

\subsection{Training Local Model with Bayesian Approach}
In contrast to the deterministic approaches that treat model weights as unknown deterministic variables, Bayesian learning treats model weights as uncertain distributions.
Bayesian learning estimates the local model by employing the MAP estimator based on the local dataset which is expressed as
\begin{equation}\label{eq:Bayesian_object}
    \begin{aligned}
    \arg\max\limits_{\boldsymbol{w}_{i}} P\left(\boldsymbol{w}_{i}| {\mathcal{D}}_i\right).
    \end{aligned}
\end{equation}
Based on the Bayesian theorem, the posterior $P \left(\boldsymbol{w}_{i}|{\mathcal{D}}_i\right)$ is achieved by
\begin{equation}\label{eq:BayesFunction}
    \begin{aligned}
         P \left(\boldsymbol{w}_{i}| {\mathcal{D}}_i\right)=\frac{P_{\text{l}}\left(\mathcal{D}_i|\boldsymbol{w}_{i}\right) \hat{P}\left(\boldsymbol{w}_{i}\right)}{P\left(\mathcal{D}_i\right)},
    \end{aligned}
\end{equation}
where $P_{\text{l}}\left(\mathcal{D}_i|\boldsymbol{w}_{i}\right)$ is the data likelihood, $\hat{P}\left(\boldsymbol{w}_{i}\right)$ is the local prior, and $P\left(\mathcal{D}_i\right)$ represents the evidence.
For each data sample $\left(\boldsymbol{x}_{i,n}, \boldsymbol{y}_{i,n}\right) \in \mathcal{D}_i$, $P_{\text{l}}\left(\mathcal{D}_i|\boldsymbol{w}_{i}\right)$ can be given by a parameterized Gaussian distribution:
\begin{equation}\label{eq:Locallikehood}
    \begin{split}
        P_{\text{l}}\left({\mathcal{D}_i} | \boldsymbol{w}_{i}\right)=\prod\limits_{n=1}^{N_i} P_{\text{l}}\left(\boldsymbol{y}_{i,n}|\phi\left(\boldsymbol{w}_i, \boldsymbol{x}_{i,n}\right)\right).
    \end{split}
\end{equation}
The data likelihood related to the cross-entropy loss in classification problems can be modeled by categorical distribution:
\begin{equation}\label{eq:likehood}
    \begin{split}
        P_{\text{l}}\left(\boldsymbol{y}_{i,n}|\phi\left(\boldsymbol{w}_i, \boldsymbol{x}_{i,n}\right)\right)\!=\!\exp{\left(-\mathcal{C}_E\left(\boldsymbol{y}_{i,n};\phi\left(\boldsymbol{w}_{i}, \boldsymbol{x}_{i,n}\right)\right)\right)},
    \end{split}
\end{equation}
where $\mathcal{C}_E\left(\cdot\right)$ is the cross-entropy function.
Then, we introduce the Bayesian FL process.
\subsubsection{Local training process}
From a Bayesian perspective, the primary goal is to maximize the global data likelihood $P\left(\boldsymbol{w}_{i}|\mathcal{D}_{i}\right)$ given 
$\hat{P}\left(\boldsymbol{w}_{i}\right)$.
However, directly calculating $P\left(\boldsymbol{w}_{i}|\mathcal{D}_{i}\right)$ according to (\ref{eq:BayesFunction}) involves calculating the evidence $P\left(\mathcal{D}_i\right)$, which contains an intractable multidimensional integral \cite{StructuredBayesianFederatedLearning}. 
To tackle this issue, we use a tractable variational distribution $q\left(\boldsymbol{w}_{i};\boldsymbol{\psi}_i\right)$ to estimate $P\left(\boldsymbol{w}_{i}|\mathcal{D}_{i}\right)$, where $q\left(\boldsymbol{w}_{i};\boldsymbol{\psi}_i\right) \sim \mathcal{N}\left(\boldsymbol{w}_{i};\boldsymbol{\mu}_{i,t}, \boldsymbol{\sigma}_{i,t}\right)$ with $\boldsymbol{\psi}_{i}=\left\{\boldsymbol{\mu}_{i}, \boldsymbol{\sigma}_{i}\right\}$ being the set of the mean and standard deviation of $q\left(\boldsymbol{w}_{i};\boldsymbol{\psi}_i\right)$.
The optimal variational posterior $q\left(\boldsymbol{w}_{i};\boldsymbol{\psi}_i\right)$ can be obtained by minimizing the Kullback-Leibler (KL) divergence, which is
\begin{equation}\label{eq:localupdate}
    \begin{aligned}
        \left(\boldsymbol{\mu}_{i,t}, \boldsymbol{\sigma}_{i,t}\right) = & \arg\min\limits_{\boldsymbol{\psi}_i} \mathbb{D}\left[q\left(\boldsymbol{w}_{i};\boldsymbol{\psi}_i\right)||\hat{P}\left(\boldsymbol{w}_{i}\right)P\left(\boldsymbol{w}_{i}|\mathcal{D}_{i}\right)\right],
        \\ = & \arg\min\limits_{\boldsymbol{\psi}_i} \left[\log\left(P\left(\mathcal{D}_{i}\right)\right)-\mathcal{B}\left(\boldsymbol{\psi}_i\right)\right]
        \\ \stackrel{(a)}{=} & \arg\max\limits_{\boldsymbol{\psi}_i} \mathcal{B}\left(\boldsymbol{\psi}_i\right),
    \end{aligned}
\end{equation}
where $\mathbb{D}\left(\cdot\right)$ denotes the KL divergence.
(a) holds since $\log\left(P\left(\mathcal{D}_{i}\right)\right)$ in (\ref{eq:localupdate}) is independent to $\boldsymbol{\psi}_i$ and can be regarded as a constant, and
\begin{equation}\label{eq:updateFo}
    \begin{aligned}
        \mathcal{B}\left(\boldsymbol{\psi}_i\right) = \mathbb{E}_{\boldsymbol{w}_{i}\sim\boldsymbol{\psi}_i}\log P_{\text{l}}\left(\mathcal{D}_{i}|\boldsymbol{w}_{i}\right) - \mathbb{D}\left(q\left(\boldsymbol{w}_{i};\boldsymbol{\psi}_i\right)||\hat{P}\left(\boldsymbol{w}_{i}\right)\right)
    \end{aligned}
\end{equation} 
is the evidence lower bound (ELBO) \cite{ELBO}.
The term $\log P_{\text{l}}\left(\mathcal{D}_{i}|\boldsymbol{w}_{i}\right)$ is the cross entropy.
Then, we adopt stochastic gradient descent (SGD) to update $\boldsymbol{\mu}_{i}$ and $\boldsymbol{\sigma}_{i}$, which can be expressed as
\begin{align}\label{eq:CFLGradient}
    \begin{aligned}
          \boldsymbol{\mu}_{i}' &= \boldsymbol{\mu}_{i} + \eta_t \sum\limits_{n \in \mathcal{N}_{i}^{t}} \frac {\partial \mathcal{B}\left(\boldsymbol{\psi}_i\right)}{\partial \boldsymbol{\mu}_{i}}, 
          \boldsymbol{\sigma}_{i}' = \boldsymbol{\sigma}_{i} + \eta_t \sum\limits_{n \in \mathcal{N}_{i}^{t}} \frac {\partial \mathcal{B}\left(\boldsymbol{\psi}_i\right)}{\partial \boldsymbol{\sigma}_{i}},
    \end{aligned}
\end{align}
where $\eta_t$ is the learning rate, and $\mathcal{N}_{i}^{t}$ is the subset of training data samples (i.e., minibatch) selected from the local dataset. 
\subsubsection{Model transmission process}
Then, each device $i$ exchanges the updated local models $P\left(\boldsymbol{w}_{i}|\mathcal{D}_{i}\right)$ with a subset of its neighbors for model aggregation. 
We adopt an orthogonal frequency division multiple access (OFDMA) transmission scheme.
Let $W$ be the available bandwidth for transmission and 
${\bf{p}}_{t}$ is the transmit power matrix at time slot $t$ with
$\left[{\bf{p}}_t\right]_{i,j}$ being the model transmit power of device $i$ to device $j$.
The location of device $i$ is captured by a vector $\boldsymbol{\nu}_{i,t} = [\nu_{i,t}^{1}, \nu_{i,t}^{2}]$, and $\boldsymbol{\nu}_{t}=\left[\boldsymbol{\nu}_{1,t},\cdots,\boldsymbol{\nu}_{R,t}\right]$ is the location matrix over all devices at iteration $t$.
The data rate of device $i$ transmitting the model updates to device $j$ is
\begin{equation}\label{eq:TimeDelay}
  \begin{aligned}
    l_{i,j}^{t}(\left[{\bf{U}}_{t}\right]_{i}, \left[{\bf{p}}_t\right]_{i,j})\!\! = \!\!\frac{S}{\frac{W}{|\left[{\bf{U}}_{t}\right]_{i}|}\log_{2}\left(1+\frac{\left[{\bf{p}}_t\right]_{i,j}{h_{i,j}^{t}\left(\boldsymbol{\nu}_{t}\right)}}{\frac{W}{|\left[{\bf{U}}_{t}\right]_{i}|}\sigma^2_{\emph{N}}}\right)},
  \end{aligned}
\end{equation}
where ${\bf{U}}_{t} \in \left\{0,1\right\}^{R \times R}$ is the FL transmission matrix with $\left[{\bf{U}}_{t}\right]_{i,j}=1$ implying that device $i$ will exchange its local FL model with device $j$ at iteration $t$, and $\left[{\bf{U}}_{t}\right]_{i,j}=0$, otherwise.
$\left[{\bf{U}}_{t}\right]_{i}$ is the FL transmission vector of device $i$ and 
$|\left[{\bf{U}}_{t}\right]_{i}| = \sum_{j \in \mathcal{M}} \left[{\bf{U}}_{t}\right]_{i,j}$ is the number of devices that will transmit the updated local models to device $i$. 
$h_{i,j}^{t}\left(\boldsymbol{\nu}_{t}\right)=\rho_{i,j}\left(d_{i,j}^{t}\left(\boldsymbol{\nu}_{t}\right)\right)^{-2}$ is the channel gain between device $i$ and $j$, where $\rho_{i,j}$ is the Rayleigh fading parameters.
$d_{i,j}^{t}\left(\boldsymbol{\nu}_{t}\right)$ is the distance between device $i$ and $j$.
$\sigma^2_{\emph{N}}$ represents the variance of additive white Gaussian noise. 
$S$ is the data size of the $P\left(\boldsymbol{w}_{i}|\mathcal{D}_{i}\right)$.
\subsubsection{Model aggregation process}
Given $P\left(\boldsymbol{w}_{j}|\mathcal{D}_{j}\right)$ that is received from neighbors, device $i$ updates its prior as \cite{ PeertoPeerVariationalFederatedLearningOverArbitraryGraphs}
\begin{equation}\label{eq:CFLUpdate}
    \begin{aligned}
           \hat{P}\left(\boldsymbol{w}_{i}\right) = \exp\left(\sum\limits_{j\in \mathcal{R}} \left[{\bf{A}}_t\right]_{i,j}\log P\left(\boldsymbol{w}_{j}|\mathcal{D}_{j}\right)\right),
    \end{aligned}
\end{equation}
where ${\bf{A}}_t \in \left[0,1\right]^{R \times R}$ is the model aggregation weight matrix with $\left[{\bf{A}}_t\right]_{i,j}$ denoting the aggregation weight of $P\left(\boldsymbol{w}_{j}|\mathcal{D}_{j}\right)$ and satisfying $\sum_{j\in\mathcal{R}}\left[{\bf{A}}_t\right]_{i,j}=1$.
Note that $\left[{\bf{A}}_t\right]_{i,j}>0$ if and only if $\left[{\bf{U}}_t\right]_{i,j}=1$.
Since the contribution of $P\left(\boldsymbol{w}_{j}|\mathcal{D}_{j}\right)$ to $\hat{P}\left(\boldsymbol{w}_{i}\right)$ is determined by $\left[{\bf{A}}_t\right]_{i,j}$, we can decrease the negative effects of Byzantine adversary $j$ by decreasing $\left[{\bf{A}}_t\right]_{i,j}$.
To this end, we introduce a parameter $\varrho$ to define the maximum tolerable negative impact, which is represented as the model aggregation weight for Byzantine adversaries and given by:
\begin{definition}\label{def:1}
    (Security constraint) The maximum tolerable negative impact of Byzantine adversaries is defined as $\varrho$, and then, the security constraint is given by 
    \begin{align}
        [{\bf{A}}_t]_{i,j} \leqslant \varrho, \forall i \in \mathcal{M},  \forall {t} \in \mathcal{T}, \mathrm{ if } \ j\in\mathcal{B}.
    \end{align}
\end{definition}
\begin{remark}
Note that {\bf{Definition \ref{def:1}}} specifically restricts the negative impact of Byzantine adversaries by actively limiting their contributions once they are identified (i.e., $j\in\mathcal{B}$).
\end{remark}
\subsection{Bayesian Based HBC Devices Definition}
To accurately reconstruct private data of device $i$, the HBC device $j$ needs to acquire $P^{\left(t-1\right)}\left(\boldsymbol{w}_{i}|\mathcal{D}_{i}\right)$ and $ P^{\left(t\right)}\left(\boldsymbol{w}_{i}|\mathcal{D}_{i}\right)$ which are directly exchanged among devices, and $\hat{P}^{\left(t\right)}\left(\boldsymbol{w}_{i}\right)$ which is locally accessible.
Since the HBC device $j$ cannot access $\hat{P}^{\left(t\right)}\left(\boldsymbol{w}_{i}\right)$, it can only approximate it based on the received posterior from the neighbors shared with device $i$.
In other words, to protect local data privacy, device $i$ must avoid allowing any other device $j$ to obtain $P^{\left(t-1\right)}\left(\boldsymbol{w}_{i}|\mathcal{D}_{i}\right)$, $ P^{\left(t\right)}\left(\boldsymbol{w}_{i}|\mathcal{D}_{i}\right)$, and $\hat{P}^{\left(t\right)}\left(\boldsymbol{w}_{i}\right)$ simultaneously, which can be summarized as two conditions, as detailed below:
\begin{itemize}
    \item {\bf{Condition 1}}: Device $j$ can receive both $P^{\left(t-1\right)}\left(\boldsymbol{w}_{i}|\mathcal{D}_{i}\right)$ and $P^{\left(t\right)}\left(\boldsymbol{w}_{i}|\mathcal{D}_{i}\right)$ simultaneously (i.e., $\left[{\bf{U}}_{t-1}\right]_{i,j} \odot \left[{\bf{U}}_{t}\right]_{i,j} = 1$).
    \item {\bf{Condition 2}}: The device $j$' approximated prior $P_{i,j,t}^{IP}$ of device $i$ approaches $\hat{P}^{\left(t\right)}\left(\boldsymbol{w}_{i}\right)$.
\end{itemize}
To describe {\bf{Condition 1}}, we introduce a vector $\boldsymbol{v}_{i,t}\left({\bf{U}}_{t-1}, {\bf{U}}_{t}\right) \in \left\{0, 1\right\}^{R \times 1}$ to determine the subset of devices that receives $P^{\left(t-1\right)}\left(\boldsymbol{w}_{i}|\mathcal{D}_{i}\right)$ and $ P^{\left(t\right)}\left(\boldsymbol{w}_{i}|\mathcal{D}_{i}\right)$ simultaneously, which is 
\begin{equation}\label{ModelReceivedvector}
    \boldsymbol{v}_{i,t}\left({\bf{U}}_{t-1}, {\bf{U}}_{t}\right) = \left[{\bf{U}}_{t-1}\right]_{i} \odot \left[{\bf{U}}_{t}\right]_{i},
\end{equation}
where $\odot$ is an element-wise multiplication operation,  $\left[\boldsymbol{v}_{i,t}\left({\bf{U}}_{t-1}, {\bf{U}}_{t}\right)\right]_z=1$ implies {\bf{Condition 1}} is satisfied by device $z$, and $\left[\boldsymbol{v}_{i,t}\left({\bf{U}}_{t-1}, {\bf{U}}_{t}\right)\right]_z=0$, otherwise.

Similarly, to describe {\bf{Condition 2}}, we first introduce a vector $\boldsymbol{r}_{i,j}\left({\bf{U}}_{t}\right) \in \left\{0, 1\right\}^{R \times 1}$ to determine the subset of devices that contribute to $\hat{P}^{\left(t\right)}\left(\boldsymbol{w}_{i}\right)$ and also exchange the model updates with device $j$, which is given by
\begin{equation}\label{ModelReceivedvector}
    \boldsymbol{r}_{i,j}\left({\bf{U}}_{t}\right)=\left[{\bf{U}}_{t}\right]_{i} \odot \left[{\bf{U}}_{t}\right]_{j},
\end{equation}
where the $z$-th element $\left[\boldsymbol{r}_{i,j}\left({\bf{U}}_{t}\right)\right]_z=1$ implies that device $z$' model can be used by device $j$ to approximate $P_{i,j,t}^{IP}$, and $\left[\boldsymbol{r}_{i,j}\left({\bf{U}}_{t}\right)\right]_z=0$, otherwise.
Then, device $j$ can approximate $P_{i,j,t}^{IP}$ by
\begin{equation}\label{eq:reconstructprior}
    P_{i,j,t}^{IP}
    = \exp\left(\sum\limits_{z\in \mathcal{R}}\frac{\left[\boldsymbol{r}_{i,j}\left({\bf{U}}_{t-1}\right)\right]_{z}}{|\boldsymbol{r}_{i,j}\left({\bf{U}}_{t-1}\right)|}\log P^{\left(t-1\right)}\left(\boldsymbol{w}_{z}|\mathcal{D}_{z}\right)\right).
\end{equation}
The gap between $P_{i,j,t}^{IP}$ and $\hat{P}^{\left(t+1\right)}\left(\boldsymbol{w}_{i}\right)$ is given by
\begin{equation}\label{eq:GapPtPIP}
    \begin{aligned}
        & H\left(\hat{P}^{\left(t\right)}\left(\boldsymbol{w}_{i}\right), P_{i,j,t}^{IP}\right) \\
        & \hspace{0.05cm}=
        \sum\limits_{z=1}^{V}\left(\left[\boldsymbol{\mu}_{i,t}\right]_z-\left[\boldsymbol{\mu}_{i,j,t}^{IP}\right]_z\right)^2 +\left(\left[\boldsymbol{\sigma}_{i,t}\right]_z-\left[\boldsymbol{\sigma}_{i,j,t}^{IP}\right]_z\right)^2,
    \end{aligned}
\end{equation}
where $\boldsymbol{\mu}_{i,j,t}^{IP}$ and $\boldsymbol{\sigma}_{i,j,t}^{IP}$ are respectively the mean vector and standard deviation vector of $P_{i,j,t}^{IP}$
\footnote{According to \cite{UnderstandingDeepGradientLeakage}, the reconstruction data error, represented by the L2 distance between the reconstructed data based on the Deep Leakage from Gradients (DLG) \cite{DLG, BatchDLG, AIJack} method and the origin data, increases as $H\left(\hat{P}^{\left(t\right)}\left(\boldsymbol{w}_{i}\right), P_{i,j,t}^{IP}\right)$ increases.}.
$V$ is the dimension of model updates.
 
Then, we introduce a parameter $\psi$ to define the maximum tolerable $H\left(\hat{P}^{\left(t\right)}\left(\boldsymbol{w}_{i}\right), P_{i,j,t}^{IP}\right)$ to protect data privacy when {\bf{Condition 1}} is satisfied, which can be summarized by the following definition. 
\begin{definition}\label{def:2}
(Privacy constraint) The maximum tolerable privacy requirement is defined as $\psi$.
Consequently, by combining the {\bf{Condition 1}} and {\bf{Condition 2}}, the privacy constraint is given by 
\begin{equation}
    H\left(\hat{P}^{\left(t\right)}\left(\boldsymbol{w}_{i}\right), P_{i,j,t}^{IP}\right) \geq \psi, \mathrm{ if } \ \left[\boldsymbol{v}_{i,t}\right]_{j}=1, 
    \forall i,j\in\mathcal{M},
\end{equation}
where $\boldsymbol{v}_{i,t}$ is short for $\boldsymbol{v}_{i,t}\left({\bf{U}}_{t-1}, {\bf{U}}_{t}\right)$.
\end{definition}
\begin{remark}
Note that {\bf{Definition \ref{def:2}}} establishes the permissible level of approximation of critical information of the local prior without compromising privacy. The specific value of $\psi$ should be determined based on the requirements of each device.
\end{remark}
\subsection{Problem Formulation}
We formulate our optimization problem whose goal is to minimize the DFL training loss while jointly considering transmission delay, transmission power, security, and privacy constraints. 
The optimization problem is formulated as
\begin{equation}\label{eq:max1}
    \begin{aligned}
    \mathop {\min }\limits_{{\bf{U}}_t, {\bf{A}}_t, {{\bf{p}}}_t} & \frac{1}{M} \sum\limits_{i=1}^{M} F\left(\boldsymbol{w}_{i}\right)   
    \end{aligned}
\end{equation}
\begin{align}\label{c1}
    \setlength{\abovedisplayskip}{-20 pt}
    \setlength{\belowdisplayskip}{-20 pt}
    {\rm{s.t.}} \,\, 
    & [{\bf{A}}_t]_{i,j} \leqslant \varrho, \forall i \in \mathcal{M},\forall j \in \mathcal{R}, \forall {t} \in \mathcal{T}, \text{ if } j\in\mathcal{B},  \tag{\theequation a} \\
    & H\left(\hat{P}^{\left(t\right)}\left(\boldsymbol{w}_{i}\right), P_{i,j,t}^{IP}\right) \geq \psi, \forall i \in\mathcal{M}, \forall j\in\mathcal{R},\notag \\
    & \qquad\qquad\qquad\qquad\qquad \text{ if } \left[\boldsymbol{v}_{i,t}\right]_{j}=1, \forall {t} \in \mathcal{T}, \tag{\theequation b} \\
    & l_{i,j}^{t}\left(\left[{\bf{U}}_{t}\right]_{i}, \left[{\bf{p}}_t\right]_{i,j}\right) \leqslant \Gamma,\forall i,j \in \mathcal{R},\forall {t} \in \mathcal{T}, \tag{\theequation c} \\
    & \sum\limits_{j=1}^R \left[{\bf{p}}_{t}\right]_{i,j} \leqslant p_{\text{max}}, \forall i \in \mathcal{R}, \forall {t} \in \mathcal{T},\tag{\theequation d}    
\end{align}
where 
(\ref{eq:max1}a) is the security constraint, as shown in {\bf{Definition \ref{def:1}}}.
(\ref{eq:max1}b) is the privacy constraint, as shown in {\bf{Definition~\ref{def:2}}}.
(\ref{eq:max1}c) is the DFL model transmission delay constraint where $\Gamma$ is the maximum transmission delay per iteration. 
(\ref{eq:max1}d) is the transmit power constraint where $p_{\text{max}}$ is the maximum transmit power.

The problem in (\ref{eq:max1}) is challenging to solve by traditional algorithms due to the following reasons.
First, each device must collect all devices' local models so as to minimize the DFL training loss while satisfying the security, privacy, and transmission delay constraints.
However, each device can only collect neighbors' information in the DFL framework.
Thus, we must collect global information and then find the optimal ${\bf{U}}_t, {\bf{A}}_t$, and ${{\bf{p}}}_t$ when using traditional iterative methods which introduce addition delay and computational overhead.
Also, as the device's location and channel information vary, each device must re-execute the iterative methods, which leads to additional overhead.
Second, constraint (\ref{eq:max1}a) needs to be satisfied when $j\in \mathcal{B}$ is achieved first.
Moreover, before satisfying constraint (\ref{eq:max1}a), the devices need first to determine the $j\in \mathcal{B}$.
Thus, each device $i$ must identify the Byzantine adversaries among its neighbors based on partial information (e.g., device connection and model updates), which cannot be solved by traditional detection methods that collect and compare all local models in the network.

\section{Security, Privacy, and Convergence Acceleration Analysis}



\subsection{Security Constraint Analysis}
To satisfy the security constraint in (\ref{eq:max1}a), device $i$ must first identify the Byzantine adversary $j$ among its neighboring devices (i.e., $j\in\mathcal{B}$).
Note that, the key difference between Byzantine adversaries and standard training devices is that the Byzantine adversaries send random model updates without local training while the standard training devices send the model updates that are optimized according to (\ref{eq:localupdate}) and (\ref{eq:updateFo}).
Hence, to distinguish Byzantine adversaries, device $i$ needs to verify whether $P_{\text{l}}^{\left(t\right)}\left({\mathcal{D}_j} | \boldsymbol{w}_{j}\right)$ and $\mathbb{D}\left(P^{\left(t\right)}(\boldsymbol{w}_{j} | \mathcal{D}_j) || \hat{P}^{\left(t\right)}\left(\boldsymbol{w}_{j}\right)\right)$ in  (\ref{eq:updateFo}) are optimized.
However, since $P_{\text{l}}^{\left(t\right)}\left({\mathcal{D}_j} | \boldsymbol{w}_{j}\right)$ is locally accessible, the only feasible approach is to verify whether $\mathbb{D}\left(P^{\left(t\right)}(\boldsymbol{w}_{j} | \mathcal{D}_{j}) || P_{j,i,t}^{IP}\right)$ is optimized.
To this end, we define a threshold $\kappa\left({\bf{U}}_{t}\right)$ to identify Byzantine adversaries, which is 
\begin{definition}\label{def:3}
(Identify Byzantine adversaries) The threshold of device $i$ to identify Byzantine adversary $j$ is defined as $\kappa\left({\bf{U}}_{t}\right)$, then, we introduce Byzantine identification matrix ${\bf{B}}_{t}\in\left\{0,1\right\}^{R \times R}$ which is given by 
\begin{equation}\label{class:Byzantine}
    \begin{aligned}
        & \left[{\bf{B}}_{t}\right]_{i,j}
        = \left\{
        \begin{aligned}
        & 1, \mathrm{if} \ \mathbb{D}\left(P^{\left(t\right)}\left(\boldsymbol{w}_{j}| {\mathcal{D}}_{j}\right)||P_{j,i,t}^{IP}\right) \geq \kappa({\bf{U}}_{t}), \\
        & 0, \mathrm{ otherwise, } \\
        \end{aligned}
        \right. 
    \end{aligned}
\end{equation}
where $\left[{\bf{B}}_{t}\right]_{i,j}=1$ implies that device $i$ classifies device $j$ as a Byzantine adversary, $\left[{\bf{B}}_{t}\right]_{i,j}=0$, otherwise.
\end{definition}
Note that, $\kappa\left({\bf{U}}_{t}\right)$ is affected by $\left[{\bf{U}}_{t}\right]_{j}$ and needs to be determined during the training process.
Then (\ref{eq:max1}a) can be rewritten as
\begin{align}\label{eq:rewritten18a}
    \mathbbm{1}_{\left\{\left[{\bf{B}}_{t}\right]_{i,j}\right\}}[{\bf{A}}_t]_{i,j} \leqslant \varrho, \forall i \in \mathcal{M}, \forall j \in \mathcal{R}, \forall {t} \in \mathcal{T},
\end{align}
where $\mathbbm{1}_{\left\{x\right\}}=x$ if $x>0$, $\mathbbm{1}_{\left\{x\right\}}=0$, otherwise.
\subsection{DFL Convergence Analysis}
Next, we will analyze how ${\bf{A}}_t$ affects the performance of DFL in (\ref{eq:max1}).
To find the relationship between ${\bf{A}}_t$ and the DFL performance, we must first analyze the convergence rate of DFL.
To this end, we first make the following assumptions, as done in \cite{SocialLearningandDistributedHypothesisTesting, PeertoPeerVariationalFederatedLearningOverArbitraryGraphs}
\begin{itemize}
\item {\it{Assumption$~1$}:}
    For each device $i \in \mathcal{M}$, there exists a  $\boldsymbol{\psi}_i$ such that $q\left(\boldsymbol{w}_{i};\boldsymbol{\psi}_i\right)=\hat{P}\left(\boldsymbol{w}_{i}\right)P\left(\boldsymbol{w}_{i}|\mathcal{D}_{i}\right)$. 
    In this case, as shown in (\ref{eq:localupdate}), it is possible to derive the expected loss to zero.
\item {\it{Assumption$~2$}:}
    For each device $i \in \mathcal{M}$, the gap between $P_{\text{l}}\left(\mathcal{D}_{i}|\boldsymbol{w}_{i}\right)$ and $P\left(\mathcal{D}_{i}\right)$ is bounded, we have
    \begin{equation}
        \begin{aligned}
        \left\|\log\frac{P_{\text{l}}\left(\mathcal{D}_{i}|\boldsymbol{w}_{i}\right)}{P\left(\mathcal{D}_{i}\right)}\right\| \leq L, \forall i\in \mathcal{M}.
        \end{aligned}
    \end{equation}
\end{itemize}
These assumptions are natural where Assumption~$1$ assumes the learning problem is realizable for all devices, and Assumption~$2$ assumes an upper bound of $\left\|\log\frac{P_{\text{l}}\left(\mathcal{D}_{i}|\boldsymbol{w}_{i}\right)}{P\left(\mathcal{D}_{i}\right)}\right\|$ exists.

To analyze the convergence rate of DFL, we can analyze the convergence of posterior $P^{\left(t\right)}\left(\boldsymbol{w}_{i}|\mathcal{D}_{i}\right)$.
However, since the devices' posterior depends on a subset of its neighbors, the posterior of each device is inconsistent. 
Then, we introduce $P^{\star\left(t\right)}\left(\boldsymbol{w}\right)$ as the global prior which is the average aggregation of all standard training devices' posterior  $P^{\left(t\right)}\left(\boldsymbol{w}_{i}|\mathcal{D}_{i}\right)$, which satisfies:
\begin{equation}
    \begin{aligned}
    \log P^{\star\left(t\right)}\left(\boldsymbol{w}\right) =  \sum\limits_{j\in \mathcal{M}} \frac{1}{M}\log P\left(\boldsymbol{w}_{j}|\mathcal{D}_{j}\right).
    \end{aligned}
\end{equation}
Moreover, due to the Assumption~$1$, the prior  $P^{\star\left(t\right)}\left(\boldsymbol{w}\right)$ is the optimal prior at iteration $t$.
To this end, we can analyze the convergence rate of DFL by analyzing how ${\bf{A}}_t$ affects the upper bound of the gap between $ P^{\left(t\right)}\left(\boldsymbol{w}_{i}|\mathcal{D}_{i}\right)$ and the optimal prior $P^{\star\left(t\right)}\left(\boldsymbol{w}\right)$, which is shown in {\bf{Theorem~\ref{thm:theorem1}}}.
\begin{theorem}\label{thm:theorem1}
Given the model aggregation matrix ${\bf{A}}_t$ at each iteration, the upper bound of the gap between each local posterior $P^{\left(t+1\right)}\left(\boldsymbol{w}_{i}|\mathcal{D}_{i}\right)$ and the average posterior $P^{\star\left(t+1\right)}\left(\boldsymbol{w}\right)$ is given by
\begin{equation}
    \begin{aligned}
        \frac{1}{M}\sum\limits_{i=1}^{M}||\log P^{\left(t+1\right)}\left(\boldsymbol{w}_{i}|\mathcal{D}_{i}\right) \!-\!&\log P^{\star\left(t+1\right)}\left(\boldsymbol{w}\right)||^{2} \\
        & \leq 
        \left(L\sqrt{M} \sum\limits_{\tau=1}^{t} \sqrt{\xi_{\tau}}\right)^{2},
    \end{aligned}
\end{equation}
where $\xi_{\tau}\in\left[0,1\right]$ is the second largest eigenvalue of $\prod\limits_{z=\tau}^{t}{\bf{A}}_{z}^{\top}\left(\prod\limits_{z=\tau}^{t}{\bf{A}}_{z}^{\top}\right)^{\top}$.
\end{theorem}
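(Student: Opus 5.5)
Our plan is to cast the Bayesian aggregation as linear consensus in the log-domain and then use a standard mixing-matrix argument. Stack the log-posteriors (viewed as functions of $\boldsymbol{w}$) into a vector $\boldsymbol{\ell}_t=[\log P^{(t)}(\boldsymbol{w}_1|\mathcal{D}_1),\dots,\log P^{(t)}(\boldsymbol{w}_M|\mathcal{D}_M)]^\top$. By Assumption~1 the local variational step is exact, so $q=\hat P P$. Combining this with (\ref{eq:BayesFunction}) and (\ref{eq:CFLUpdate}), each update reads
\begin{equation*}
\log P^{(t+1)}(\boldsymbol{w}_i|\mathcal{D}_i)=\sum_{j}[{\bf A}_t]_{i,j}\log P^{(t)}(\boldsymbol{w}_j|\mathcal{D}_j)+\boldsymbol{g}_{i},
\end{equation*}
where $\boldsymbol{g}_i=\log\frac{P_{\rm l}(\mathcal{D}_i|\boldsymbol{w}_i)}{P(\mathcal{D}_i)}$. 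In vector form this is $\boldsymbol{\ell}_{t+1}={\bf A}_t\boldsymbol{\ell}_t+\boldsymbol{g}$, with $\|\boldsymbol{g}_i\|\le L$ by Assumption~2. Unrolling the recursion from an initial uniform prior (which lies in the consensus subspace) gives
\begin{equation*}
\boldsymbol{\ell}_{t+1}=\sum_{\tau=1}^{t}\Big(\prod_{z=\tau}^{t}{\bf A}_z\Big)\boldsymbol{g}+\boldsymbol{g},
\end{equation*}
up to index conventions; we will line these up so that the final sum runs over $\tau=1,\dots,t$ as in the statement.

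Next, we subtract the average. The quantity $\log P^{\star(t+1)}$ is $\frac1M\mathbf{1}^\top\boldsymbol{\ell}_{t+1}$, so the left-hand side equals $\frac1M\|({\bf I}-\frac1M\mathbf{1}\mathbf{1}^\top)\boldsymbol{\ell}_{t+1}\|^2$. To handle this we need $\frac1M\mathbf{1}\mathbf{1}^\top$ to commute with the products. This holds if we assume, as is implicit in the paper's use of the \emph{average} posterior as the optimum, that ${\bf A}_t$ restricted to $\mathcal{M}$ is doubly stochastic; we will state this explicitly. Under that assumption, the deviation is
\begin{equation*}
\sum_{\tau}\Big(\prod_{z=\tau}^{t}{\bf A}_z-\tfrac1M\mathbf{1}\mathbf{1}^\top\Big)\boldsymbol{g},
\end{equation*}
where the bare $\boldsymbol{g}$ term is absorbed into this sum. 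By the triangle inequality the norm is at most $\sum_\tau\|(\Phi_\tau-\frac1M\mathbf{1}\mathbf{1}^\top)\boldsymbol{g}\|$, with $\Phi_\tau=\prod_{z=\tau}^t{\bf A}_z$.

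For each term, note that $\Phi_\tau$ is doubly stochastic with $\mathbf{1}$ as top singular vector (singular value $1$). Hence $\|(\Phi_\tau-\frac1M\mathbf{1}\mathbf{1}^\top)\boldsymbol{x}\|\le\sigma_2(\Phi_\tau)\|\boldsymbol{x}\|$. Here $\sigma_2(\Phi_\tau)^2$ is the second largest eigenvalue of $\Phi_\tau^\top\Phi_\tau$, which equals $\xi_\tau$ as defined via $\prod{\bf A}_z^\top(\prod{\bf A}_z^\top)^\top$; the two forms are equivalent up to ordering and transposition conventions, which we will check. Bounding $\|\boldsymbol{g}\|\le\sqrt{M}L$ by Assumption~2 gives
\begin{equation*}
\Big\|({\bf I}-\tfrac1M\mathbf{1}\mathbf{1}^\top)\boldsymbol{\ell}_{t+1}\Big\|\le L\sqrt{M}\sum_{\tau=1}^{t}\sqrt{\xi_\tau}.
\end{equation*}
Squaring and dividing by $M$ then gives the stated bound, which is even tighter by a factor of $M$. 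The stated form follows a fortiori, or directly if the $1/M$ is absorbed differently in the norm convention.

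The main obstacle is the bookkeeping of the unrolling. Two points need care. First, the "$+\boldsymbol{g}$" term at the current step has no mixing factor and must be absorbed, either by an index shift or by noting that it is handled in the $\tau=t$ term. Second, Byzantine rows of ${\bf A}_t$ must be handled; we restrict to $\mathcal{M}$, assume the Byzantine weights are removed, and renormalize so that the submatrix stays doubly stochastic. We also must justify treating functions of $\boldsymbol{w}$ as vectors with a norm satisfying the triangle inequality. This is fine, since the log-densities live in a normed function space and Assumption~2 is stated in that norm.
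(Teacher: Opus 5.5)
\textbf{Gap: the bare $+\boldsymbol g$ term cannot be absorbed.} Your skeleton is the paper's: a log-domain linear recursion, unrolling, the triangle inequality, $\|\Phi_\tau-{\bf J}\|_{\rm op}=\sigma_2(\Phi_\tau)=\sqrt{\xi_\tau}$, and $\|\boldsymbol g\|\le L\sqrt M$. Your explicit double-stochasticity hypothesis is legitimate. The paper relies on it too: it commutes $({\bf I}-{\bf J})$ with ${\bf A}_\tau^\top$ and treats $\mathbf 1$ as the top eigenvector of ${\bf Q}_\tau{\bf Q}_\tau^\top$, and both steps fail when only the row sums equal one.

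The problem is your claim that the bare $+\boldsymbol g$ term can be absorbed. In the posterior recursion $\boldsymbol\ell_{t+1}={\bf A}_t\boldsymbol\ell_t+\boldsymbol g_{t+1}$, the deviation contains $({\bf I}-{\bf J})\boldsymbol g_{t+1}$, which passes through no mixing matrix. Its factor is $\|{\bf I}-{\bf J}\|_{\rm op}=1$, i.e.\ an extra term with $\xi=1$ from the empty product. It is not part of the $\tau=t$ term, which is $({\bf A}_t-{\bf J})\boldsymbol g_t$. An index shift only relabels it, and the factor-$M$ slack you noticed cannot swallow it.

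In fact no argument can. Take ${\bf A}_z={\bf J}$ for all $z$. Then every $\xi_\tau=0$, so the right-hand side is zero, while $\log P^{(t+1)}(\boldsymbol w_i|\mathcal D_i)-\log P^{\star(t+1)}(\boldsymbol w)=\boldsymbol g_{i,t+1}-\frac1M\sum_j\boldsymbol g_{j,t+1}\neq 0$ for non-IID data. For the posteriors, your steps honestly give the bound with $1+\sum_{\tau=1}^{t}\sqrt{\xi_\tau}$ in place of $\sum_{\tau=1}^{t}\sqrt{\xi_\tau}$.

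\textbf{How the paper gets the stated form, and how you can fix it.} The paper's recursion for $E^{(t)}$ also attaches ${\bf A}_{t-1}^\top$ to the newest likelihood term. So its $E^{(t)}$ is in effect the consensus error of the aggregated log-priors $\log\hat P^{(t)}(\boldsymbol w_i)$, not of the posteriors. Run your argument on $\hat{\boldsymbol\ell}_{t+1}={\bf A}_t\boldsymbol\ell_t$ instead, writing $\boldsymbol\ell_s=\boldsymbol g_s+\hat{\boldsymbol\ell}_s$. From a common initial prior you get $\hat{\boldsymbol\ell}_{t+1}=\sum_{\tau=1}^{t}{\bf A}_t\cdots{\bf A}_\tau\boldsymbol g_\tau+c\mathbf 1$. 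Every likelihood term is now mixed at least once, and your bound gives exactly $L\sqrt M\sum_{\tau=1}^t\sqrt{\xi_\tau}$, with the $1/M$ to spare. So you must either change the bounded quantity this way or carry the extra $L\sqrt M$ term.

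Two smaller points. First, $\boldsymbol g$ depends on the round, since the evidence depends on the current prior; this is harmless because Assumption 2 is uniform. Second, the inequality $\|(\Phi_\tau-{\bf J})G\|\le\|\Phi_\tau-{\bf J}\|_{\rm op}\|G\|$ for function-valued rows needs an inner-product norm, as the paper's Frobenius norm implicitly assumes. An arbitrary normed function space is not enough.
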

\begin{proof}
    See Appendix \ref{proof:Theorem1}.
\end{proof}

From Theorem~\ref{thm:theorem1}, we can see that the gap between $P^{\left(t+1\right)}\left(\boldsymbol{w}_{i}|\mathcal{D}_{i}\right)$ and $P^{\star\left(t+1\right)}\left(\boldsymbol{w}\right)$ decreases as the second largest eigenvalues of $\prod\limits_{z=\tau}^{t}{\bf{A}}_{z}^{\top}\left(\prod\limits_{z=\tau}^{t}{\bf{A}}_{z}^{\top}\right)^{\top}$ decreases.
Thus, we can minimize $\xi_{\tau}$ to decrease the gap between the DFL posterior $P^{\left(t+1\right)}\left(\boldsymbol{w}_{i}|\mathcal{D}_{i}\right)$ and the optimal DFL prior $P^{\star\left(t+1\right)}\left(\boldsymbol{w}\right)$.
Then, problem (\ref{eq:max1}) can be rewritten as 
\begin{equation}\label{eq:max2}
    \begin{aligned}
    \mathop {\min }\limits_{{\bf{U}}_t, {\bf{A}}_t, {{\bf{p}}}_t} & \sum\limits_{\tau=1}^{t} \xi_{\tau}  
    \end{aligned}
\end{equation}
\begin{align}\label{c2}
    \setlength{\abovedisplayskip}{-20 pt}
    \setlength{\belowdisplayskip}{-20 pt}
    {\rm{s.t.}} \,\, 
    & \mathbbm{1}_{\left\{\left[{\bf{B}}_{t}\right]_{i,j}\right\}}[{\bf{A}}_t]_{i,j} \leqslant \varrho, \forall i \in \mathcal{M}, \forall j \in \mathcal{R}, \forall {t} \in \mathcal{T}, \tag{\theequation a} \\
    & (\ref{eq:max1}\text{b})-(\ref{eq:max1}\text{d}) \notag
\end{align}
where (\ref{eq:max2}a) is due to the (\ref{eq:rewritten18a}).

To solve (\ref{eq:max2}), one promising solution is to use an RL based algorithm that can adapt to the dynamic environment.
However, the traditional RL algorithms that rely on fixed input dimensions may be invalid for addressing the problem with varying input dimensions caused by the varying number of neighbors in DFL.
Moreover, since the security, privacy, and convergence acceleration as shown in (\ref{eq:max2}) are affected by ${\bf{A}}_{t}$ at previous iterations, devices need to jointly optimize ${\bf{A}}_t$ and ${\bf{U}}_t$ by considering the correlation between consecutive iterations.
To tackle these issues, one promising solution is to replace traditional fixed-input neural networks in RL with the neural networks that combine GNNs \cite{ToNGNN} with recurrent neural networks (RNN) \cite{ToNRNN} so as to adapt dynamic input dimensions and jointly handling the spatial and temporal information.
Hence, we propose a distributed RL algorithm that combines GNN and RNN to solve the problem (\ref{eq:max2}).

\section{Proposed GNN Based RL Method}
In this section, our goal is to propose a novel GNN based RL algorithm that enables each device to determine the ${\bf{U}}_t$, ${\bf{A}}_t$, and ${\bf{p}}_t$ independently based on the partial information (e.g., received posterior, approximated prior, and device connection) from neighbors to achieve security, privacy, and convergence acceleration which satisfy the transmission constraint.
Since ${\bf{A}}_t$ and ${\bf{p}}_t$ can only be optimized after ${\bf{U}}_t$ is determined, a two stages distributed approach is proposed.
In the first stage, we propose a GNN based RL algorithm which enables each device $i$ to optimize $\left[{\bf{U}}_t\right]_{i}$ by extracting the spatial and temporal features based on partial information.
In the second stage, ${\bf{p}}_t$ is determined by convex optimization, and $\left[{\bf{A}}_t\right]_{i}$ is determined based on the received model updates.
Compared to standard RL algorithms \cite{PerformanceOptimizationforSemanticCommunicationsAnAttentionBasedReinforcementLearningApproach}, our proposed GNN based RL approach applies GNN and RNN serves as the backbone to capture dependencies in the spatial dimension including wireless channel and neighbor' features and applies RNN to capture correlations in the time series dimension, then the RL method is used to infer the optimal action (i.e., ${\bf{U}}_t$ and ${\bf{A}}_t$).

\begin{figure*}[t]
  \centering
  \includegraphics[width=15.5cm]{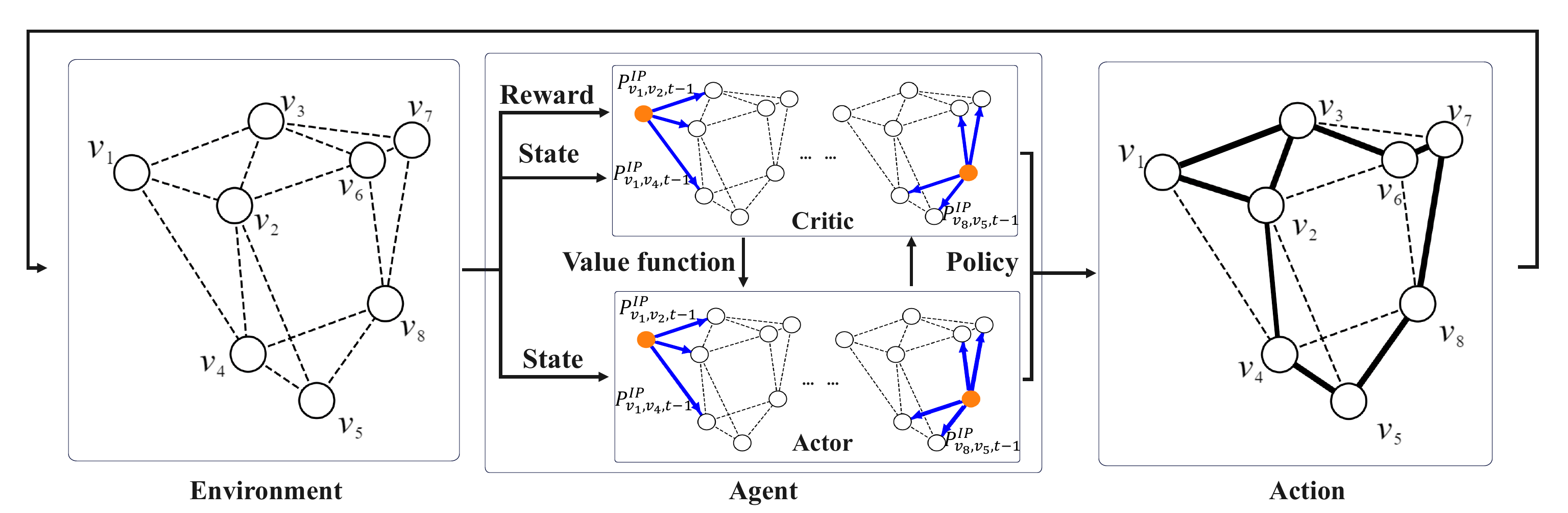}
  \centering
  \caption{Illustration of our proposed GNN-based algorithm.}\label{fig2}
\end{figure*}

Next, we will first introduce the use of GNN-RNN based models to estimate ${\bf{U}}_t$.
Then, devices exchange the model updates and allocate transmission power ${\bf{p}}_t$  based on ${\bf{U}}_t$ and determine ${\bf{A}}_t$ based on the received model updates.
Finally, we will show the RL algorithm to update the proposed GNN-RNN based models as shown in Fig. \ref{fig2}.
\subsection{GNN-RNN Based Model Design}
\begin{figure}[t]
  \centering
  \includegraphics[width=8cm]{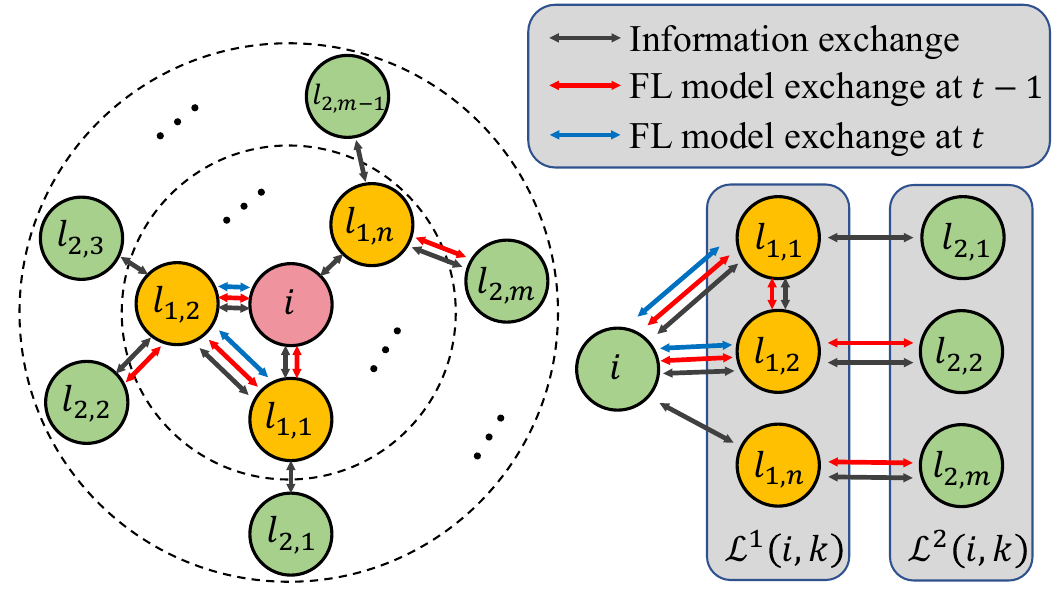}
  \centering
  \vspace{-0.1cm}
  \caption{Exchange information and FL model with neighbors.}\label{neighbors}
  \vspace{-0cm}
\end{figure}
\subsubsection{Input} To define the input of the GNN model at each device $i$, we first denote the set of $k$ nearest neighbors of device $i$ in iteration $t$ by $\mathcal{L}^1\left(i,k,t\right)$, i.e. first-hop devices. 
Let $\mathcal{L}^2\left(i,k,t\right)$ be the set of $k$ nearest neighbors of the devices in $\mathcal{L}^1\left(i,k,t\right)$, i.e. second-hop devices. 
For example, in Fig. \ref{neighbors}, $\mathcal{L}^1\left(i,k,t\right)=\{l_{1,1},\cdots,l_{1,n}\}$ while $\mathcal{L}^2\left(i,k,t\right)=\{l_{2,1},\cdots,l_{2,m}\}$.
Let $\mathcal{L}\left(i,k,t\right)=\mathcal{L}^1\left(i,k,t\right) \cup \mathcal{L}^2\left(i,k,t\right) \cup \{i\}$ and $|\mathcal{L}\left(i,k,t\right)|$ are the number of devices in $\mathcal{L}\left(i,k,t\right)$. 
Then, the features of the connection between device $i$ and device $j$ are defined as $\boldsymbol{E}_{i,j,t}=\left[H\left(\hat{P}^{\left(t\right)}\left(\boldsymbol{w}_{i}\right), P_{i,j,t}^{IP}\right), \left[{\bf{A}}_{t-1}\right]_{i,j}\right]$ being the vector containing $H\left(\hat{P}^{\left(t\right)}\left(\boldsymbol{w}_{i}\right), P_{i,j,t}^{IP}\right)$ and $\left[{\bf{A}}_{t-1}\right]_{i,j}$.
Then, the edge features $\boldsymbol{E}_{i,j,t}$ need to be normalized which is given by 
\begin{equation}\label{eq:GAT1}
    \begin{aligned}
        {\tilde{\boldsymbol{E}}}_{i,j,t} = \frac{\boldsymbol{E}_{i,j,t}}{\sum_{k=0}^{|\mathcal{L}^1\left(i,k,t\right)|}\boldsymbol{E}_{i,k,t}}. \\
    \end{aligned}
\end{equation}
The inputs to the GNN of each device $i$ at iteration $t$ are denoted by ${\tilde{\boldsymbol{E}}}_{i,j,t}$,  $\boldsymbol{\nu}_t$, and neighbor relationship $\mathcal{L}^1\left(i,k,t\right)$.
\subsubsection{Spatial features extraction} 
The GNN hidden layers to extract spatial features consist of a) graph attention (GAT) layer I, b) fully connected (FC) layer II, c) GAT layer III, and d) FC layer IV, which are expressed as
\begin{equation}\label{eq:GAT1}
    \begin{aligned}
        \boldsymbol{q}_{i,t} = \psi \left(\frac{1}{K_1}\sum \limits_{k_1 = 1}^{K_1} \sum \limits_{k \in \mathcal{L}^1\left(i,j\right)} \alpha_{i,k}^{k_1} \boldsymbol{W}^{k_1} \left[\boldsymbol{\nu}_{k}^{t}, {\tilde{\boldsymbol{E}}}_{i,k,t}\right]\right), \\
    \end{aligned}
\end{equation}
\begin{equation}\label{eq:Hidden_full_connection1}
    \begin{aligned}
        \boldsymbol{z}_{i,t} = \psi \left( \boldsymbol{W}^{K_2} \boldsymbol{q}_{i,t}\right), \\
    \end{aligned}
\end{equation}
\begin{equation}\label{eq:GAT2}
    \begin{aligned}
        \boldsymbol{q}'_{i,t} = \psi \left(\frac{1}{K_3}\sum \limits_{k_3 = 1}^{K_3} \sum \limits_{k \in \mathcal{L}^1\left(i,j\right)} \alpha_{i,k}^{k_3} \boldsymbol{W}^{k_3} \boldsymbol{z}_{k,t} \right), \\
    \end{aligned}
\end{equation}
\begin{equation}\label{eq:Hidden_full_connection2}
    \begin{aligned}
        \boldsymbol{z}'_{i,t} = \psi \left( \boldsymbol{W}^{K_4} \boldsymbol{q}'_{i,t}\right), \\
    \end{aligned}
\end{equation}
where $\boldsymbol{q}_{i,t}$ and $\boldsymbol{q}'_{i,t}$ are respectively the output of the GAT layer I and III, $\boldsymbol{z}_{i,t}$ and $\boldsymbol{z}'_{i,t}$ are respectively the output of the FC layers II and FC layers IV. 
$K_1$ and $K_3$ are respectively the numbers of attention heads in the first GAT layer and the second GAT layer. 
$\boldsymbol{W}^{k_1} \in \mathbb{R}^{4\times V_1}$ and $\boldsymbol{W}^{k_3} \in \mathbb{R}^{V_1\times V_2}$ are the GAT model parameters used to transform the input features into higher-level features. 
$\boldsymbol{W}^{k_2} \in \mathbb{R}^{V_1\times V_1}$ and $\boldsymbol{W}^{k_4} \in \mathbb{R}^{V_2\times V_2}$ are the two FC layers. 
Here, we need to note that $\boldsymbol{z}_{i,t}$ updated by device $i$ would be shared with devices in $\mathcal{L}\left(i,k\right)$ for computation in (\ref{eq:Hidden_full_connection1}).
$\alpha_{i,j}^{k_1}$ and $\alpha_{i,j}^{k_3}$ are the self-attention coefficients of the first and second GAT layers, and $\psi \left(\cdot\right)$ is the leaky rectified linear unit function. 
To define the attention coefficients $\alpha_{i,j}^{k_1}$, we need to define the FC function \cite{GAT} 
that can compute the correlation between two neighbor devices as follows:
\begin{equation}
    s_{i,j} = \psi \left(\boldsymbol{W}^{{\alpha}_1}\times\left(\boldsymbol{W}^{k_1} \left[\boldsymbol{\nu}_{i}^{t}, {\tilde{\boldsymbol{E}}}_{i,j,t}\right], \boldsymbol{W}^{k_1} \left[\boldsymbol{\nu}_{j}^{t}, {\tilde{\boldsymbol{E}}}_{j,i,t}\right]\right)\right).
\end{equation}
Subsequently, we normalize the correlation across each device in $\mathcal{L}^1\left(i,k,t\right)$ as the attention coefficient using the softmax function to calculate the self-attention coefficient of device $i$ and device $j$ \cite{AttentionisAllYouNeed}:
\begin{equation}
    \alpha_{i,j}^{k_1} = \frac{e^{s_{i,j}}}{\sum_{l=1}^{|\mathcal{L}^1\left(i,k,t\right)|}e^{s_{i,l}}}, \forall j \in \mathcal{L}^1\left(i,k,t\right).
\end{equation}
\subsubsection{Temporal features extraction}
The hidden RNN layers to extract temporal features between $\boldsymbol{z}'_{i,t}$ in the time series iterations consist of a) Update gate layer, b) Reset gate layer, c) Candidate representation layer, and d) Final update layer, which are expressed as
\begin{equation}\label{eq:NewInfo}
    \begin{aligned}
        \boldsymbol{e}_{i,t} = \psi \left(\boldsymbol{W}^{e_1} \boldsymbol{z}'_{i,t}+\boldsymbol{W}^{e_2} \boldsymbol{h}_{i,t-1}\right), \\
    \end{aligned}
\end{equation}
\begin{equation}\label{eq:OldInfo}
    \begin{aligned}
        \boldsymbol{r}_{i,t} = \psi \left(\boldsymbol{W}^{r_1} \boldsymbol{z}'_{i,t}+\boldsymbol{W}^{r_2} \boldsymbol{h}_{i,t-1}\right), \\
    \end{aligned}
\end{equation}
\begin{equation}\label{eq:AggInfo}
    \begin{aligned}
        \tilde{\boldsymbol{h}}_{i,t} = \delta\left(\boldsymbol{W}^{a} \boldsymbol{z}'_{i,t}+U\left(\boldsymbol{r}_{i,t}\odot \boldsymbol{h}_{i,t-1}\right)\right), \\
    \end{aligned}
\end{equation}
\begin{equation}\label{eq:Choose}
    \begin{aligned}
        \boldsymbol{h}_{i,t} = \left(1-\boldsymbol{e}_{i,t}\right)\odot\boldsymbol{h}_{i,t-1}+\boldsymbol{e}_{i,t}\odot\tilde{\boldsymbol{h}}_{i,t}, \\
    \end{aligned}
\end{equation}
where $\boldsymbol{e}_{i,t}$ is the update gate and $\boldsymbol{r}_{i,t}$
being the forget gate.
$\boldsymbol{h}_{i,t-1}$ and $\boldsymbol{h}_{i,t}$ are the extracted temporal features of iteration $t-1$ and $t$, respectively.
$\tilde{\boldsymbol{h}}_{i,t}$ is the candidate representation which aggregated the information of iteration $t-1$ and $t$.
$\delta\left(\cdot\right)$ is the hyperbolic tangent function.
$\boldsymbol{W}^{e_1}, \boldsymbol{W}^{e_2}, \boldsymbol{W}^{r_1}$, $\boldsymbol{W}^{r_2}$, and $\boldsymbol{W}^{a}$ are trainable parameters.

\subsubsection{Output}
Given the extracted features $\boldsymbol{z}'_{i,t}$ and $\boldsymbol{h}_{i,t}$, devices need to determine the ${\bf{U}}_t$ first, and then exchange the local model updates based on the device connection scheme ${\bf{U}}_t$.
Then, devices determine ${\bf{A}}_t$ based on $\mathbb{D}\left(P^{\left(t\right)}\left(\boldsymbol{w}_{j}| {\mathcal{D}}_{j}\right)||P_{j,i,t}^{IP}\right)$.

To determine ${\bf{U}}_t$, we calculate the device connection probability $\mu_{i,j}^{t}$ by applying the softmax function based on $\boldsymbol{z}'_{i,t}$ which is given by
\begin{equation}
    \mu_{i,j}^{t} = \frac{e^{\boldsymbol{z}'_{i,t}\times\boldsymbol{z}^{'\top}_{j,t}}}{\sum_{l=1}^{|\mathcal{L}^1\left(i,k,t\right)|} e^{\boldsymbol{z}'_{i,t}\times\boldsymbol{z}^{'\top}_{l,t}}}, \forall j \in \mathcal{L}^1\left(i,k,t\right).
\end{equation} 
Given the estimated connection probabilities $\boldsymbol{\mu}_{i}^t=\left[\mu_{i,1}^{t},\cdots,\mu_{i,|\mathcal{L}^1\left(i,k,t\right)|}^{t}\right]$, we employ an applying scheme that enables each device to determine device connections independently which is summarized as follows: 
\begin{enumerate}
  \item Each device $i$ sends a connection request to its neighbor $j$ that has the highest connection probability $\mu_{i,j}^{t}\in\boldsymbol{\mu}_{i}^{t}$ when its remaining transmission power is sufficient to transmit its models, and receives the request from its neighbors.  
  \item Each device builds a connection with the neighbors that has the highest connection probability request. 
\end{enumerate}
Steps 1)-2) are repeated until the remaining transmit power of each device $i$ is not enough to satisfy the communication requirements.
The optimal transmit power allocation of $p_{i,j,t}$ for device $i$ for transmitting its model updates to device $j$ is given by 
\begin{equation}
    \left[{\bf{p}}_t\right]_{i,j} = \frac{W\sigma^2_{\emph{N}}}{|\left[{\bf{U}}_{t}\right]_{i}|{h_{i,j}^{t}\left(\boldsymbol{\nu}_{t}\right)}}\left(2^{\frac{S{W}}{\Gamma|\left[{\bf{U}}_{t}\right]_{i}|}}-1\right).
\end{equation} 

Then, devices exchange the model updates based on $\left[{\bf{U}}_{t}\right]_{i}$, and subsequently, $\mathbb{D}\left(P^{\left(t\right)}\left(\boldsymbol{w}_{j}| {\mathcal{D}}_{j}\right)||P_{j,i,t}^{IP}\right)$ can be calculated.
Given model $\mathbb{D}\left(P^{\left(t\right)}\left(\boldsymbol{w}_{j}| {\mathcal{D}}_{j}\right)||P_{j,i,t}^{IP}\right)$, a GNN layer is employed to determine the model aggregation weight, which is given by
\begin{equation}
    \begin{aligned}
        h_{i,j,t} = \psi \left( \boldsymbol{W}^{K_5} \times \left[\boldsymbol{h}_{i,t}, \mathbb{D}\left(P^{\left(t\right)}\left(\boldsymbol{w}_{j}| {\mathcal{D}}_{j}\right)||P_{j,i,t}^{IP}\right)\right]\right), \\
    \end{aligned}
\end{equation}
where $\boldsymbol{W}^{K_5}$ is a trainable parameter.
Then, ${\bf{A}}_{t}$ is determined by a softmax function based on $h_{i,j,t}$, which is given by
\begin{equation}
    \left[{\bf{A}}_{t}\right]_{i,j} = \frac{e^{\mathbbm{1}_{\left\{\left[{\bf{U}}_{t}\right]_{i,j}\right\}}h_{i,j,t}}}{\sum_{l=1}^{|\left[{\bf{U}}_t\right]_{i}|} e^{\mathbbm{1}_{\left\{\left[{\bf{U}}_{t}\right]_{i,l}\right\}}h_{i,l,t}}}, \forall j \in \mathcal{L}^1\left(i,k,t\right).
\end{equation}

\subsection{PPO Based Algorithm Design}
Then, we propose a PPO-based RL method for GNN-RNN based model update.
The proposed PPO method consists of six components: a) agent, b) action, c) state, d) policy, e) critic, and f) reward, which are specified as follows:
\begin{itemize}
    \item {\bfseries Agent:} Our agent is the mobile devices in $\mathcal{R}$ that determines ${\bf{U}}_{t}$ and ${\bf{A}}_{t}$ independently.
    \item {\bfseries Action:} We define the action of agent $i$ at iteration $t$ as the $\boldsymbol{\mu}_{i,j}^{t}\in\left[0,1\right]$ and $\left[{\bf{A}}_{t}\right]_{i,j}\in\left[0,1\right]$ that represents the device connection probability and the model aggregation weight. 
    \item {\bfseries State:} The state observed by device $i$ at iteration $t$, defined as set $\mathcal{S}_{i,t}$ that consists of: 
    1) the neighboring devices' location information $\left[\boldsymbol{\nu}_{1,t},\cdots,\boldsymbol{\nu}_{|\mathcal{L}\left(i,k,t\right)|,t}\right]$, 
    2) the historical model aggregation weight matrix  $\left[{\bf{A}}_{t}\right]_{i}$,
    3) the KL divergence between the approximated prior and the local prior $H\left(\hat{P}^{\left(t\right)}\left(\boldsymbol{w}_{i}\right), P_{i,j,t}^{IP}\right)$,
    4) the distance between the approximated prior and the received posterior $\mathbb{D}\left(P^{\left(t\right)}\left(\boldsymbol{w}_{j}| {\mathcal{D}}_{j}\right)||P_{j,i,t}^{IP}\right)$ if $\left[{\bf{U}}_{t}\right]_{i,j} = 1$.
    \item {\bfseries Policy:} The policy is the probability of the agent $i$ for choosing $\mu_{i,j}^{t}\in\left[0,1\right]$ and $\left[{\bf{A}}_{t}\right]_{i,j}\in\left[0,1\right]$ given the state $\mathcal{S}_{i,t}$. 
    The PPO algorithm uses a GNN-RNN based model parameterized by $\boldsymbol{\theta}_t$ to build the relationship between the input state $\mathcal{S}_{i,t}$ and the output policy that can achieve the minimum $\xi_{\tau}$ and satisfy the constraints in (\ref{eq:max2}), which is also called the actor. 
    Then, the policy can be expressed as $\boldsymbol{\pi}_{\boldsymbol{\theta}_t}\left(\mathcal{S}_{i,t}, \boldsymbol{\mu}_{i}^{t}, \left[{{\bf{A}}_{t}}\right]_{i}\right) = P\left(\boldsymbol{\mu}_{i}^{t}, \left[{{\bf{A}}_{t}}\right]_{i}|\mathcal{S}_{i,t}\right)$.
    \item {\bfseries Critic:} The critic $\boldsymbol{V}_{\boldsymbol{\theta}_{t}^{'}}\left(\mathcal{S}_{i,t}\right)$ in the proposed method is a function to estimate the value-function of the current policy for a given state $\mathcal{S}_{i,t}$, which is the GNN-RNN based model which parameterized by $\boldsymbol{\theta}_{t}^{'}$.
    \item {\bfseries Reward:}  The reward of choosing action $\boldsymbol{\mu}_{i}^{t}, {\bf{A}}_{t}$ based on state $\mathcal{S}_{i,t}$ is given by
    \begin{equation}\label{eq:reward}
    \begin{aligned}
        r\left(\boldsymbol{\mu}_{i}^{t}, {{\bf{A}}_{t}}|\mathcal{S}_{i,t}\right) = & 
        - \xi_{t} - \sum\limits_{j=1}^{R} \mathbbm{1}_{\left\{ \mathbbm{1}_{\{[{\bf{B}}_{t}]_{i,j}\}} [{\bf{A}}_t]_{i,j} - \varrho\right\}} \\ 
        & - \sum\limits_{j=1}^{R} \mathbbm{1}_{\left\{ [\boldsymbol{v}_{i,t}]_{j} \left(\psi - H\left(\hat{P}^{(t)}(\boldsymbol{w}_{i}), P_{i,j,t}^{IP}\right)\right)\right\}},
    \end{aligned}
\end{equation}
     Since the reward of the proposed method is equivalent to the objective function of the problem (\ref{eq:max2}) and the constraint, the proposed PPO algorithm can solve the minimization problem (\ref{eq:max2}) by maximizing the reward. 
\end{itemize}

\subsection{PPO Algorithm for Total Reward Maximization}
Next, we introduce the entire procedure of training the proposed PPO algorithm for solving the problem (\ref{eq:max2}). 
The proposed PPO algorithm used to update GNN-RNN based model is trained offline, which means that the device connection scheme and model aggregation matrix are trained using the historical data of the DFL system.
In the proposed PPO algorithm, the objective function of the actor is 
\begin{equation}\label{eq:expectedReward}
    \begin{aligned}
        \mathfrak{L}\left(\mathcal{S}_{i,t}, \boldsymbol{\mu}_{i}^{t}, {{\bf{A}}_{t}}\boldsymbol{\theta}_{t}\right)=\mathbb{E}\left[p_{\boldsymbol{\theta}_{t}}A\left(\mathcal{S}_{i,t}, \boldsymbol{\mu}_{i}^{t}, {\bf{A}}_{t}\right)\right],
    \end{aligned}
\end{equation}
where 
\begin{equation}\label{eq:Asa}
    \begin{aligned}
        A\left(\mathcal{S}_{i,t}, \boldsymbol{\mu}_{i}^{t}, {{\bf{A}}_{t}}\right)=r\left(\mathcal{S}_{i,t}, \boldsymbol{\mu}_{i}^{t}, {{\bf{A}}_{t}}\right) + \gamma \boldsymbol{V}_{\boldsymbol{\theta}_{t}^{'}}\left(\mathcal{S}_{i,t}\right) - \boldsymbol{V}_{\boldsymbol{\theta}_{t}^{'}}\left(\mathcal{S}_{i,t}\right),
    \end{aligned}
\end{equation}
and $p_{\boldsymbol{\theta}_{t}}=\boldsymbol{\pi}_{\boldsymbol{\theta}_{t}}/\boldsymbol{\pi}_{{\boldsymbol{\theta}_{t-1}}}$ is the advantage function and the probability ratio of the current policy and the old policy function, respectively. 
To satisfy the trust region constraint in PPO, the proposed PPO-based approach maximizes a clipping surrogate objective function, which is expressed as
\begin{equation}\label{eq:actualReward}
    \begin{aligned}
        &\mathfrak{L}^{\text{c}}\left(\mathcal{S}_{i,t}, \boldsymbol{\mu}_{i}^{t}, {{\bf{A}}_{t}}|\boldsymbol{\theta}_{t}\right) \hfill \\
        &\hspace{0.4cm} = \mathbb{E}\!\left[\min\!\left\{p_{\boldsymbol{\theta}_t}, \! \zeta \left(p_{\boldsymbol{\theta}_{t}}, \!1\!-\!\epsilon ,\!1\!+\!\epsilon\right)\right\}A\left(\mathcal{S}_{i,t}, \boldsymbol{\mu}_{i}^{t}, {{\bf{A}}_{t}}\right)\right],
    \end{aligned}
\end{equation}
where $\epsilon$ is a hyper-parameter that adjusts the clipping fraction of the clipping range, and $\zeta\left(\cdot\right)$ is the clip function. Then, we can update the actor model $\boldsymbol{\theta}_{t}$ by mini-batch SGD method which can be represented as 
\begin{equation}\label{eq:SGDPPO}
    \begin{aligned}
        \boldsymbol{\theta}_t=\boldsymbol{\theta}_{t-1}-\frac{1}{B}\!\!\!\sum\limits_{\left(\mathcal{S}_{i,t},\boldsymbol{\mu}_{i}^{t}, {{\bf{A}}_{t}},r_{t},\mathcal{S}_{i,t+1}\right)} \!\!\! \nabla_{\boldsymbol{\theta}_{t-1}} \mathfrak{L}^{\text{c}}\left(\mathcal{S}_{i,t},\boldsymbol{\mu}_{i}^{t}, {{\bf{A}}_{t}}|\boldsymbol{\theta}_{t-1}\right).
    \end{aligned}
\end{equation}
After conducting $n$ iterations of policy function optimization, we utilize regression on the mean-squared error to adjust the value function $\boldsymbol{V}_{\boldsymbol{\theta}'}$ based on the actual rewards, which is given by
\begin{equation}\label{eq:StateValue}
    \begin{aligned}
        \boldsymbol{\theta}_{t}^{'}=\boldsymbol{\theta}_{t-1}^{'} \! - \! \frac{1}{B} \!\! \sum\limits_{\left(\mathcal{S}_{i,t},\boldsymbol{\mu}_{i}^{t}, {{\bf{A}}_{t}},r_{t},\mathcal{S}_{i,t+1}\right)} \!\!\!\!\!\!\!\!\!\! \nabla_{\boldsymbol{\theta}_{t-1}^{'}}\left(\boldsymbol{V}_{\boldsymbol{\theta}_{t-1}^{'}}\left(\mathcal{S}_{i,t}\right)-r_{i,t}\right)^2.
    \end{aligned}
\end{equation}

By iteratively running the policy updating step (\ref{eq:SGDPPO}) and the state-value updating step (\ref{eq:StateValue}), the parameters $\boldsymbol{\theta}_{t}$ and $\boldsymbol{\theta}_{t}'$ of the policy and state-value can find the relation between the device connections and the total reward, jointly considering the DFL performance, privacy protection, and robust aggregation. The specific training process of the proposed PPO algorithm is summarized in Algorithm \ref{PPOAlgorithm}.


\begin{algorithm}[t]
    \caption{GNN based PPO Algorithm for FL model transmission optimization}
    \label{PPOAlgorithm}
    \begin{algorithmic}[1]
        \STATE \textbf{Initialize} policy parameters $\boldsymbol{\theta}_{t}$, initial state-value function parameters $\boldsymbol{\theta}_{t}^{'}$.
        \FOR{$t=1,2,\ldots,T$}
            \STATE Collect $\left(\mathcal{S}_{i,t},\boldsymbol{\mu}_{i}^{t}, {{\bf{A}}_{t}},r_{t},\mathcal{S}_{i,t+1}\right)$ by running policy $\boldsymbol{\pi}_{\boldsymbol{\theta}_{t}}$ in the environment;
            \STATE Compute advantage estimates, $A\left(\mathcal{S}_{i,t}, \boldsymbol{\mu}_{i}^{t}, {{\bf{A}}_{t}}\right)$ based on the current state-value function $V_{\boldsymbol{\theta}_{t}^{'}}$;
            \STATE Update the policy by maximizing the (\ref{eq:actualReward});
            \STATE Fit state-value function by regression on mean-squared error based on (\ref{eq:StateValue});
        \ENDFOR
    \end{algorithmic}
\end{algorithm}

\section{Simulation Results}
For our simulations, we consider a network with a circular area having a radius $r = 1000$ m, consisting $M$ standard training devices and $B$ Byzantine adversaries that upload random posterior \cite{Trustiness_basedhierarchicaldecentralizedfederatedlearning}.
The other simulation parameters are shown in Tabel \ref{Table:I}. 
We perform simulations using LeNet on MNIST \cite{MNIST}, EMNIST \cite{EMNIST} datasets, and ShuffleNet \cite{shufflenet} on CIFAR-10 \cite{CIFAR10} dataset for the performance comparison.
To simulate data distribution skew across devices, we assume that the proportion of class labels for each device $i$ over $N$ classes is parameterized by a vector $\boldsymbol{\beta}_{i} = \left[\boldsymbol{\beta}_{i,1}, \cdots, \boldsymbol{\beta}_{i, N}\right] \sim{\text{Dir}}_N\left(1\right)$, which follows a Dirichlet distribution \cite{Dirichlet} with classes prior $1$.
The experiments are conducted on a computer with $3.4$GHz IntelCore i$7-13700$KF processor, $64$GB of RAM, GeForce RTX $4090$ GPU running Linux.
\begin{table}[t]
  \centering
  \renewcommand\arraystretch{1.0}
  \caption{Simulation Parameters}
  \label{tab:ExpTable}
  \small
  \setlength{\tabcolsep}{1.2mm}{
      \begin{tabular}{|c|c|c|c|}
          \hline
\textbf{Parameters}&\textbf{Values}&\textbf{Parameters}&\textbf{Values}\\
          \hline
          \emph{W}& 1 MHz & $f$ & $3.3$ GHz\\
          \hline
          $\sigma^2_N$ & -174 dB & $\Gamma$ & $10$ ms\\
          \hline
  \end{tabular}}
  \label{Table:I}
\end{table}

For comparison, we utilize three baselines:
\begin{itemize}
    \item A hierarchical DFL method that assigns devices into $3$ clusters in which a randomly selected device in each cluster acts as the parameter server to collect the model updates encrypted by the CKKS technique and aggregate with the federated average algorithm.
    Then, the parameter servers decrypt and exchange the aggregated model updates with each other for inter-cluster model aggregation, in which the received model with verified performance less than $50\%$ would be removed, as shown in \cite{Trustiness_basedhierarchicaldecentralizedfederatedlearning} (labeled "THDFL" in plots).
    \item A DFL method that enables devices to exchange model updates encrypted by the CKKS technique and a security proof generated by the zk-SNARK method with all available neighbors and aggregate with the federated average algorithm, as shown in \cite{Enhancingprivacypreservation} (labeled "PT-DFL" in plots).
    \item A deterministic-based DFL approach that enables devices to build a random connection scheme and aggregate with the federated average algorithm (labeled "Random connection" in plots).
\end{itemize}


\subsection{DFL Performance Analysis}

\begin{figure}[t]
    \centering
    \includegraphics[width=7.5cm]{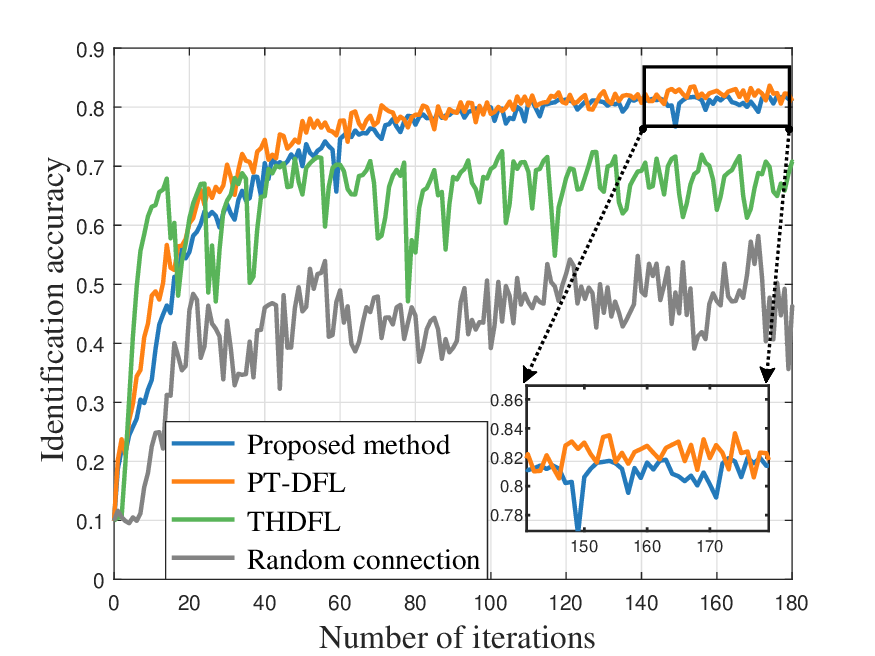}
    \label{1}
    \caption{Identification accuracy vs. number of iterations.}
    \label{fig_acc1}
\end{figure}
Fig. \ref{fig_acc1} shows how the average identification accuracy of all considered algorithms changes as the number of iterations varies on the MNIST dataset for $R=6$ and $B=1$. 
From Fig. \ref{fig_acc1}, we can see that the proposed method achieves similar accuracy performance with the PT-DFL method.
This is due to the fact that the proposed method can detect Byzantine adversaries as accurately as PT-DFL which is based on the zk-SNARK method.
From Fig. \ref{fig_acc1}, we can also see that the proposed method achieves up to $13\%$ and $30\%$ improvement compared to the THDFL method and random connection scheme, respectively
The $13\%$ and $30\%$ gain stem from the fact that the proposed method enables each device to distinguish the Byzantine adversaries and reduce their model aggregation weight, whereas the THDFL method and the random connection method permit poisoned model updates to participate in the federated averaging aggregation process, thereby degrading identification accuracy.
Fig. \ref{fig_acc1} shows that the proposed method achieves stable convergence compared with the THDFL method and the random connection method.
This implies that our proposed method utilizes a dynamic device connection that minimizes $\xi_t$ to achieve yields significant gains in convergence speed compared to the THDFL method and the random connection method, which adopt a completely random clustering and connection strategy.

\begin{figure}[t]
  \centering
  \includegraphics[width=7.5cm]{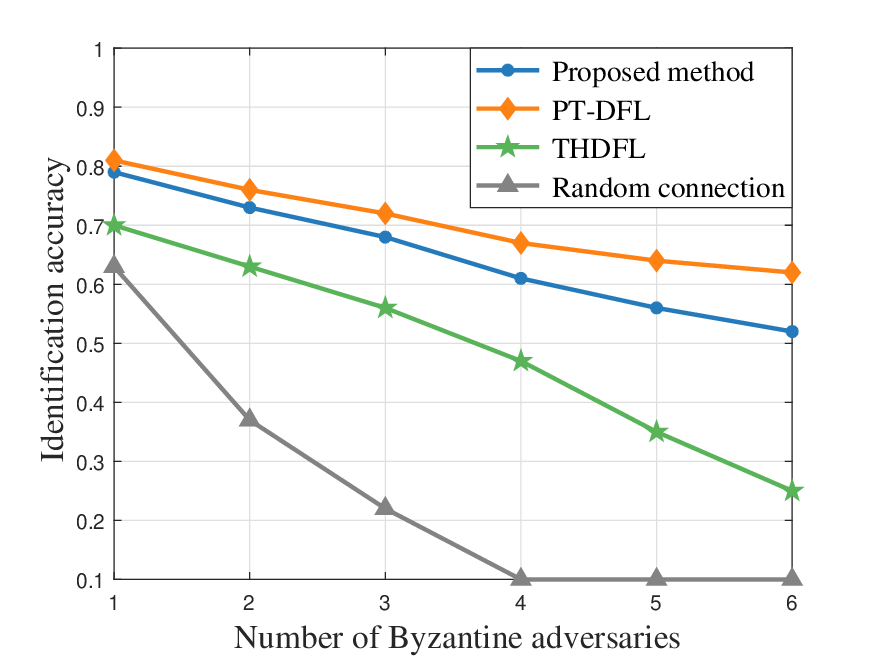}
  \centering
  \caption{Identification accuracy vs. the number of Byzantine adversaries.}
  \label{fig_acc3}
\end{figure}

Fig. \ref{fig_acc3} shows how the average DFL identification accuracy changes as the number of Byzantine adversaries varies on the MNIST dataset for $R=12$ case. 
From Fig. \ref{fig_acc3}, we can see that as the proportion of Byzantine adversaries increases, all considered methods have lower average identification accuracy.
This is due to the fact that the number of standard training devices decreased and the destructiveness of the Byzantine adversaries increased.
From Fig. \ref{fig_acc3}, we can also observe that the PT-DFL and the proposed method can improve the identification accuracy by up to $35\%$ and $25\%$ compared with THDFL and random connection scheme, respectively.
The $35\%$ gain stems from the fact the PT-DFL method can detect and exclude all malicious devices by zk-SNARK based method.
The $25\%$ gain stems from the fact that the proposed method can detect the Byzantine adversaries based on partial information (e.g., received posterior, approximated prior, and device connection) and decrease their model aggregation weight thus ensuring Byzantine robustness. 
Fig. \ref{fig_acc3} also shows that the accuracy of THDFL and the random connection scheme collapses when the proportion of Byzantine adversaries reaches $50\%$.
This is due to the fact that the THDFL and the random connection scheme enable all Byzantine adversaries to participate in the model aggregation with the same aggregation weight as the standard training devices, and hence they are impacted by Byzantine adversaries directly and eventually crash.


\begin{figure}[t]
    \centering
    \begin{minipage}{0.48\linewidth}
        \centering
        \includegraphics[width=\linewidth]{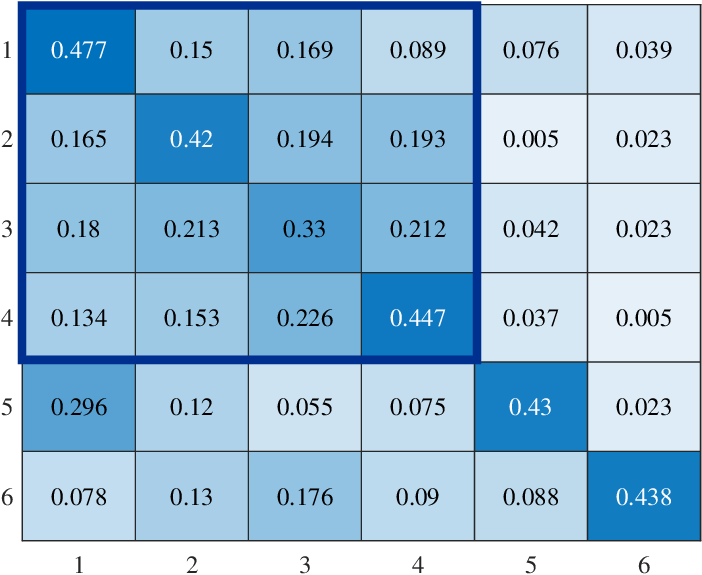}
        \vspace{0.1cm} 
        {\small (a) $R=6$ with $B=2$}
    \end{minipage}
    \hfill
    \begin{minipage}{0.48\linewidth}
        \centering
        \includegraphics[width=\linewidth]{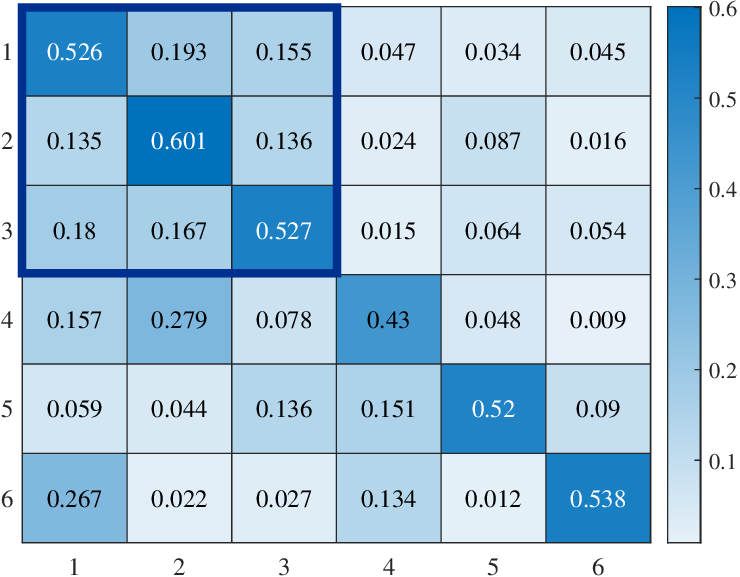}
        \vspace{0.1cm}
        {\small (b) $R=6$ with $B=3$.}
    \end{minipage}
    
    \caption{The proposed method defends results against Byzantine adversaries. The model aggregation weights across standard training devices are surrounded by blue boxes.}
    \label{heatmap} 
\end{figure}
Fig. \ref{heatmap} presents a visualization of the averaged model aggregation weight matrix ${\bf{A}}_t\in\left[0,1\right]^{6\times 6}$ generated by the proposed method trained in $R=6$ and $B=2$ case.
Fig. \ref{heatmap} (a) and Fig. \ref{heatmap} (b) are in $B=2$ and $B=3$ cases, respectively.
From Fig. \ref{heatmap} we can see that the model aggregation weight corresponding to the Byzantine adversaries exhibits significant deviations compared to those of the standard training devices (surrounded by blue boxes).
This is due to the fact that the proposed method can decrease the model aggregation weight of Byzantine adversaries, thus ensuring Byzantine robustness.
From Fig. \ref{heatmap} we can also see that the model aggregation weights of each device's local model updates increase as the proportion of Byzantine adversaries increases.
This is due to the fact that the Byzantine robustness of the proposed method relies on distinguishing the model updates that do not reflect the approximated prior which is considered as an averaged aggregation of all received model updates.
Consequently, an increasing proportion of Byzantine adversaries also interferes with the credibility of other standard training devices and increases the model aggregation weights of each device's local model updates.
\subsection{Sensitivity Analysis}
\begin{figure}[t]
  \centering
  \includegraphics[width=7.5cm]{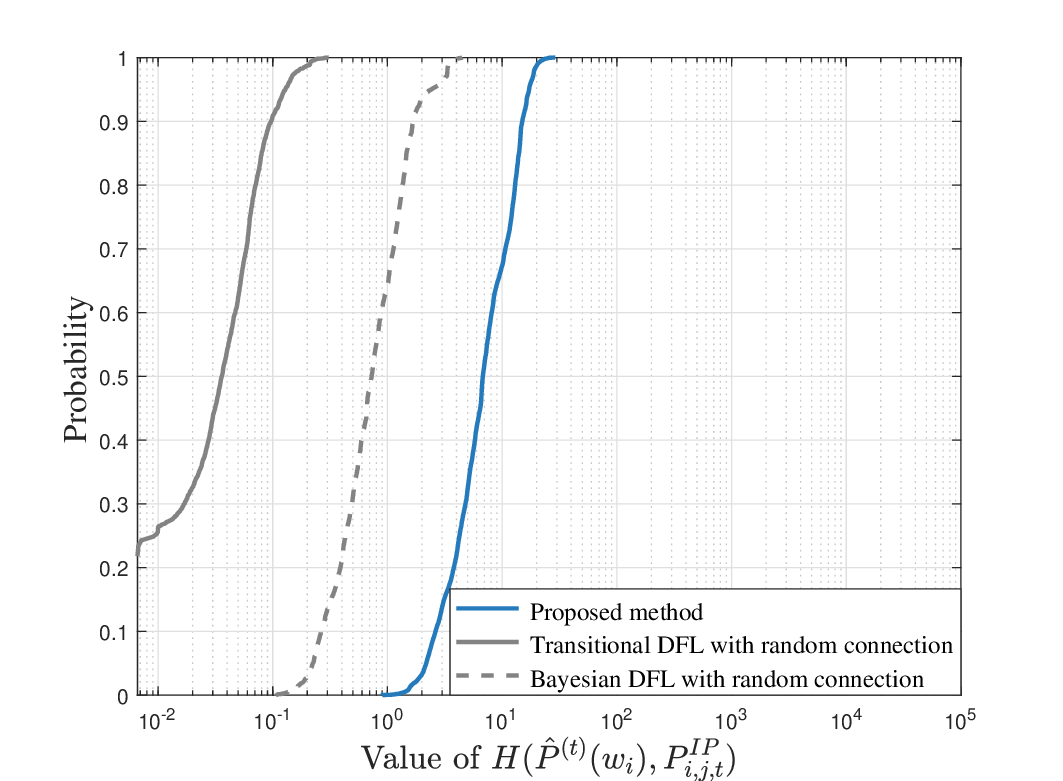}
  \centering
  \caption{Cumulative Distribution Function (CDF) of $H\left(\hat{P}^{\left(t\right)}\left(\boldsymbol{w}_{i}\right), P_{j,i,t}^{IP}\right)$}
  \vspace{-0.1cm}
  \label{privacy1}
\end{figure}

Fig. \ref{privacy1} assesses the gap between the local prior and the approximated prior for the considered methods without encryption with the EMNIST dataset in $R=12$ case.
The considered baselines consist of the proposed method, the traditional DFL (deterministic based) with a random connection scheme, and the Bayesian DFL with a random connection scheme.
The gap considered in the traditional DFL is measured based on the gap between the local model weight and the approximated model weight by other devices.
The results are shown as the cumulative distribution function (CDF) of the value of the gap obtained over training rounds.
We do not consider encryption-based methods (e.g., PT-DFL and THDFL) as their model updates are exchanged in ciphertext, which requires access to decryption keys.
From Fig. \ref{privacy1} we can see that the proposed method yields significant gains in the gap between the local model and the approximated model compared to other baselines.
This is because the proposed method keeps the local prior invisible to neighbors by utilizing Bayesian training method and optimizes the gap between the true prior and the approximated prior by adjusting the device connection scheme based on the proposed GNN method.
Besides, although the Bayesian based DFL with random connection method can protect the local posterior through model aggregation in a Bayesian fashion compared to the traditional DFL, it ignores the optimization of neighbor selection for model exchange, which also causes privacy leakage.
From Fig. \ref{privacy1} we can also see that traditional DFL with random connection has about $30\%$ of complete model exposure (where the gap between the local model weight and the approximated model weight equals zero), which implies its high risks of privacy leakage.
This is due to the fact that the adversaries in the traditional DFL with random connection can fully reconstruct their neighbors' updated local models easily when they receive all model updates transmitted during the same communication round.
From Fig. \ref{privacy1}, we can demonstrate that the proposed Bayesian DFL and GNN based method can significantly enhance privacy preservation compared with the traditional methods.

\begin{figure}[t]
\centering
    \subfigure[EMNIS./Sample.]{
    \begin{minipage}[t]{0.27\linewidth}
    \centering
    \includegraphics[width=1in]{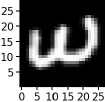}
    \end{minipage}%
    }\quad
    \subfigure[EMNIS./Random.]{
    \begin{minipage}[t]{0.27\linewidth}
    \centering
    \includegraphics[width=1in]{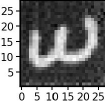}
    \end{minipage}%
    }\quad 
    \subfigure[EMNIS./Proposed.]{
    \begin{minipage}[t]{0.27\linewidth}
    \centering
    \includegraphics[width=1in]{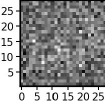}
    \end{minipage}
    } \\
    \subfigure[MNIST/Sample.]{
    \begin{minipage}[t]{0.27\linewidth}
    \centering
    \includegraphics[width=1in]{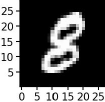}
    \end{minipage}%
    }\quad
    \subfigure[MNIST/Random.]{
    \begin{minipage}[t]{0.27\linewidth}
    \centering
    \includegraphics[width=1in]{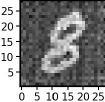}
    \end{minipage}%
    }\quad
    \subfigure[MNIST/Proposed.]{
    \begin{minipage}[t]{0.27\linewidth}
    \centering
    \includegraphics[width=1in]{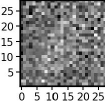}
    \end{minipage}
    } \\
    \subfigure[CIFAR./Sample.]{
    \begin{minipage}[t]{0.27\linewidth}
    \centering
    \includegraphics[width=1in]{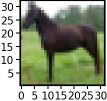}
    \end{minipage}%
    }\quad
    \subfigure[CIFAR./Random.]{
    \begin{minipage}[t]{0.27\linewidth}
    \centering
    \includegraphics[width=1in]{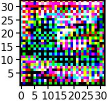}
    \end{minipage}%
    }\quad
    \subfigure[CIFAR./Proposed.]{
    \begin{minipage}[t]{0.27\linewidth}
    \centering
    \includegraphics[width=1in]{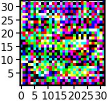}
    \end{minipage}
    }
\centering
\caption{Defending results against DLG attacks.}
\label{privacy2}
\end{figure}
\subsection{Privacy Analysis}
Fig. \ref{privacy2} is a visualization of reconstruction results at the average $H\left(\hat{P}^{\left(t\right)}\left(\boldsymbol{w}_{i}\right), P_{i,j,t}^{IP}\right)$ of the random connection scheme and the proposed method for EMNIST, MNIST, and the CIFAR-10 dataset corresponding to the CDF plot in Fig. \ref{privacy1}.
From Fig. \ref{privacy2}, we can see that the proposed algorithm effectively prevents the leakage of private data compared to the random connection scheme.
This is due to the fact that the proposed method keeps the local prior invisible and prevents other devices from accurate approximations of local prior through active device connection optimization.

\subsection{Complexity Analysis}
\begin{table}[!t]
    \centering
    \begin{threeparttable}
        \caption{Delay of each operation}
        \label{Tab:Overhead}
        \begin{tabularx}{\linewidth}{|c|X|c|}
            \hline
            \textbf{Index} & \textbf{Operation} & \textbf{Delay} \\ 
            \hline
            1 & CKSS encryption \cite{CKKS, Tenseal} & $11.4 \ \mu$s/byte \\
            \hline
            2 & CKSS addition & $9.54 \ \mu$s/byte \\
            \hline
            3 & CKSS decryption & $3.82 \ \mu$s/byte \\
            \hline
            4 & zk-SNARK proof generation\tnote{1} & $7$ s \\
            \hline
            5 & zk-SNARK proof verify  & $35\ m$s \\
            \hline
            6 & Proposed method  & $30\ m$s \\
            \hline
        \end{tabularx}
        \begin{tablenotes}
            \footnotesize
            \item[1] Containing delay of setup and the proof generation with input depth $500$ and kernel size $5$ \cite{Enhancingprivacypreservation, zkBench}.
        \end{tablenotes}
    \end{threeparttable}
\end{table}

In Table~\ref{Tab:Overhead} we provide a computation delay comparison associated with the CKSS homomorphic encryption algorithm employed in both the PT-DFL and THDFL methods, the zk-SNARK algorithm utilized in the PT-DFL method, and our proposed method.
From Table~\ref{Tab:Overhead}, we can see that the overhead of the CKSS homomorphic encryption operation increases linearly with the size of the model updates increases, and the zk-SNARK proof generation operation to validate the security is unbearable compared to other considered methods.
This is due to the fact that the proposed method satisfies privacy and security requirements only by controlling the information difference (the accuracy of approximated prior) between devices without cryptographic operations.
Moreover, the devices selected as parameter servers in the THDFL scheme need to perform CKKS encryption, addition and decryption, while other devices only need to perform CKSS encryption, which causes unfairness.
\begin{figure}
    \centering
    \includegraphics[width=7.5cm]{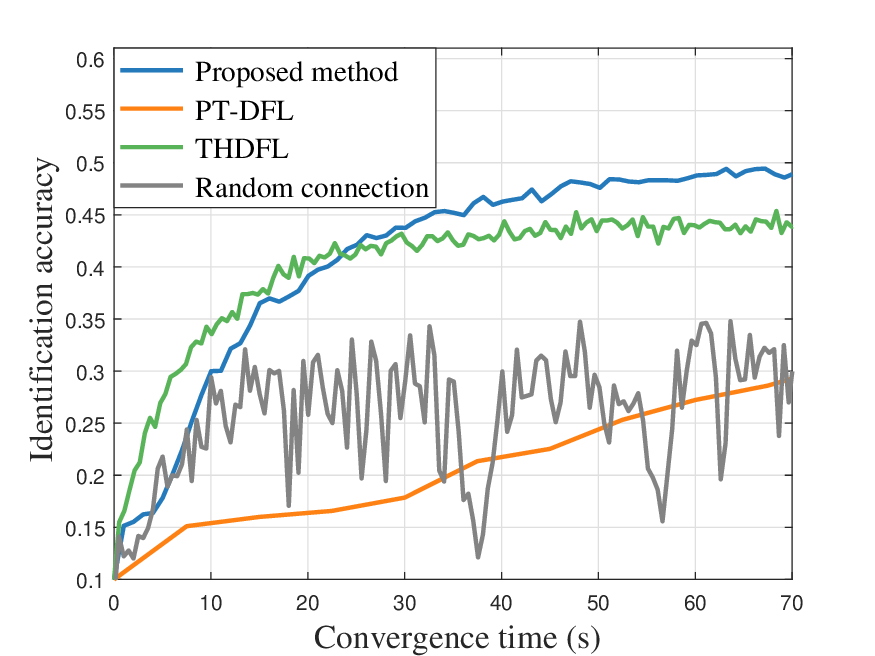}
    \label{1}
    \caption{Identification accuracy vs. convergence time.}
    \label{fig_acc2}
\end{figure}

Fig. \ref{fig_acc2} shows how the average DFL identification accuracy changes as the convergence time varies with CIFAR-10 dataset in $R=6$ and $B=1$ case. 
In this figure, we can see that, the proposed method increase the identification accuracy by up to $5\%$ and $20\%$ compared to THDFL method and PT-DFL method, respectively.
This is due to the fact that PT-DFL and THDFL suffer from a heavy computational delay introduced by encryption and proof generation operations. 
In contrast, the proposed method allows devices to exchange model updates without cryptographic operations, enabling more training rounds within the same time consumption.

\section{Conclusion}
In this paper, we proposed a novel DFL framework in which mobile devices collaboratively train an ML model via exchanging their model updates with an appropriate subset of neighbors (i.e., device connection scheme) to achieve Byzantine robustness, privacy preserving, and convergence acceleration. 
We first analytically characterized the relationship between the dynamic device connection and Byzantine adversary's detection, privacy protection requirement, and DFL convergence acceleration.
Through mathematical analysis, we found that: 1) hiding critical information from neighbors can effectively prevent private data from leakage; 2) devices can verify whether the models received from neighbors have been updated as expected to identify Byzantine adversaries; 3) the convergence speed of DFL can be modeled as the characteristics of the device connection matrix which are affected by device connection scheme design.
Then, to jointly achieve these goals by using partial information (e.g., device connection and model updates) in the DFL framework, we proposed a GNN-based RL method that allows each device to independently infer the device connections and model aggregation matrix.
Numerical evaluation on real-world machine learning tasks demonstrates that the proposed method yields significant gains in Byzantine robustness, privacy-preserving, and convergence acceleration with lightweight overhead compared to conventional approaches.

For future work, we are going to consider a more general case that mobile devices may depart dynamically from DFL.
In this case, the devices' mobility patterns will play an important role in the problem formulation.
Moreover, in this case, DFL should enable each departed device to selectively eliminate identifiable information through machine unlearning which would be very interesting and technically challenging due to the distributed characteristic of DFL.

\appendix

\subsection{Proof of Theorem \ref{thm:theorem1}}
\begin{proof}\label{proof:Theorem1}
To prove Theorem~\ref{thm:theorem1}, we first introduce data likelihood distribution matrix $P^{\left(t\right)}_{\text{l}}\left(\mathcal{D}|\boldsymbol{w}\right)$,
prior matrix $\hat{P}^{\left(t\right)}\left(\boldsymbol{w}\right)$, 
posterior matrix $P^{\left(t\right)}\left({\boldsymbol{w}|\mathcal{D}}\right)$, and evidence matrix $P_{\mathcal{D}}$ as follows:
\begin{equation}
    \begin{aligned}
    & P^{\left(t\right)}_{\text{l}}\left(\mathcal{D}|\boldsymbol{w}\right) = \left[P_{\text{l}}\left(\mathcal{D}_{1}|\boldsymbol{w}_{1}\right),\cdots,P_{\text{l}}\left(\mathcal{D}_{M}|\boldsymbol{w}_{M}\right)\right], \\
    & \hat{P}^{\left(t\right)}\left(\boldsymbol{w}\right) = \left[\hat{P}^{\left(t\right)}\left(\boldsymbol{w}_{1}\right),\cdots,\hat{P}^{\left(t\right)}\left(\boldsymbol{w}_{M}\right)\right], \\
    & P^{\left(t\right)}\left({\boldsymbol{w}|\mathcal{D}}\right)=\left[P\left({\boldsymbol{w}_{1}|\mathcal{D}}_{1}\right),\cdots,P\left({\boldsymbol{w}_{M}|\mathcal{D}}_{M}\right)\right], \\
    & P_{\mathcal{D}}=\left[P\left(\mathcal{D}_{1}\right),\cdots,P\left(\mathcal{D}_{M}\right)\right].
    \end{aligned}
\end{equation}
The local model updates process (\ref{eq:BayesFunction}) can be rewritten as
\begin{equation}\label{rewr:Update}
    \begin{aligned}
         P^{\left(t\right)}\left(\boldsymbol{w}| {\mathcal{D}}\right)=\frac{P_{\text{l}}^{\left(t\right)} \left(\mathcal{D}|\boldsymbol{w}\right)\hat{P}^{\left(t\right)}\left(\boldsymbol{w}\right)}{P\left(\mathcal{D}\right)}.
    \end{aligned}
\end{equation}
The model aggregation process (\ref{eq:CFLUpdate}) can be rewritten as 
\begin{equation}
    \begin{aligned}
        \log\hat{P}^{\left(t\right)}\left(\boldsymbol{w}\right) = \log P^{\left(t-1\right)}\left(\boldsymbol{w}|\mathcal{D}\right){\bf{A}}_{t-1}.
    \end{aligned}
\end{equation}
Moreover, the matrix of the average aggregation of all devices' posterior $P^{\star\left(t\right)}\left(\boldsymbol{w}\right)$ is given by 
\begin{equation}
    \begin{aligned}
    \log P^{\star\left(t\right)}\left(\boldsymbol{w}\right) =  \log \left(P^{\left(t\right)}\left({\boldsymbol{w}|\mathcal{D}}\right)\right) {\bf{J}}.
    \end{aligned}
\end{equation}
where ${\bf{J}}=\left\{\frac{1}{M}\right\}^{M \times M}$.
The gap between $\log  P^{\left(t\right)}\left(\boldsymbol{w}|\mathcal{D}\right)$ and $\log P^{\star\left(t\right)}\left(\boldsymbol{w}\right)$ can be given by
\begin{equation}\label{Recursion1}
    \begin{aligned}
    & E^{\left(t\right)}\left(\boldsymbol{w}\right) 
    = \log P^{\left(t\right)}\left(\boldsymbol{w}| {\mathcal{D}}\right) - \log P^{\star\left(t\right)}\left(\boldsymbol{w}\right) \\
    & = \log\left(\frac{P^{\left(t-1\right)}_{\text{l}}\left(\mathcal{D}|\boldsymbol{w}\right)}{P^{\left(t-1\right)}_{\mathcal{D}}}\right){\bf{A}}_{t-1}^{\top} + \log \hat{P}^{\left(t-1\right)}\left(\boldsymbol{w}\right) {\bf{A}}_{t-1}^{\top} \\
    & - \left(\log\left(\frac{P^{\left(t-1\right)}_{\text{l}}\left(\mathcal{D}|\boldsymbol{w}\right)}{P^{\left(t-1\right)}_{\mathcal{D}}}\right){\bf{A}}_{t-1}^{\top} + \log \hat{P}^{\left(t-1\right)}\left(\boldsymbol{w}\right) {\bf{A}}_{t-1}^{\top}\right){\bf{J}} \\
    & = \left(\log P^{\left(t-1\right)}\left(\boldsymbol{w}|\mathcal{D}\right) - \log P^{\star\left(t-1\right)}\left(\boldsymbol{w}\right) \right){\bf{A}}_{t-1}^{\top} \\
    & + \log\left(\frac{P^{\left(t-1\right)}_{\text{l}}\left(\mathcal{D}|\boldsymbol{w}\right)}{P^{\left(t-1\right)}_{\mathcal{D}}}\right){\bf{A}}_{t-1}^{\top}\left({\bf{I}}-{\bf{J}}\right)
     \\
    & = E^{\left(t-1\right)}\left(\boldsymbol{w}\right) {\bf{A}}_{t-1}^{\top} 
    + \log\left(\frac{P^{\left(t-1\right)}_{\text{l}}}{P^{\left(t-1\right)}_{\mathcal{D}}}\right){\bf{A}}_{t-1}^{\top}\left({\bf{I}}-{\bf{J}}\right),
    \end{aligned}
\end{equation}
where the first equality holds due to (\ref{rewr:Update}).
Therefore, we can rewrite $E^{\left(t\right)}\left(\boldsymbol{w}\right)$ in (\ref{Recursion1}) as 
\begin{equation}
    \begin{aligned}
    & E^{\left(t\right)}\left(\boldsymbol{w}\right) \\ 
    & = 
    E^{\left(t-1\right)}\left(\boldsymbol{w}\right) {\bf{A}}_{t-1}^{\top} 
    + \log\left(\frac{P^{\left(t-1\right)}_{\text{l}}\left(\mathcal{D}|\boldsymbol{w}\right)}{P^{\left(t-1\right)}_{\mathcal{D}}}\right){\bf{A}}_{t-1}^{\top}\left({\bf{I}}-{\bf{J}}\right)\\
    & = E^{\left(t-2\right)}\left(\boldsymbol{w}\right) {\bf{A}}_{t-2}^{\top}{\bf{A}}_{t-1}^{\top} \!+\! \log\left(\frac{P^{\left(t-1\right)}_{\text{l}}\left(\mathcal{D}|\boldsymbol{w}\right)}{P^{\left(t-1\right)}_{\mathcal{D}}}\right){\bf{A}}_{t-1}^{\top}\left({\bf{I}}\!-\!{\bf{J}}\right) \\
    & \hspace{0.4cm} + \log\left(\frac{P^{\left(t-2\right)}_{\text{l}}\left(\mathcal{D}|\boldsymbol{w}\right)}{P^{\left(t-2\right)}_{\mathcal{D}}}\right){\bf{A}}_{t-2}^{\top}\left({\bf{I}}-{\bf{J}}\right){\bf{A}}_{t-1}^{\top}  \\
    & = E^{\left(1\right)}\left(\boldsymbol{w}\right) \prod\limits_{\tau=1}^{t-1}{\bf{A}}_{\tau}\!\! + \!\!\sum\limits_{\tau=1}^{t-1}\log\!\left(\frac{P^{\left(\tau\right)}_{\text{l}}\left(\mathcal{D}|\boldsymbol{w}\right)}{P^{\left(\tau\right)}_{\mathcal{D}}}\right){\bf{A}}_{\tau}^{\top}\left({\bf{I}}\!-\!{\bf{J}}\right)\!\!\!\!\prod\limits_{z=\tau+1}^{t-1}\!\!{\bf{A}}_{z}^{\top} \\
    & = \sum\limits_{\tau=1}^{t-1}\log\left(\frac{P^{\left(\tau\right)}_{\text{l}}\left(\mathcal{D}|\boldsymbol{w}\right)}{P^{\left(\tau\right)}_{\mathcal{D}}}\right){\bf{A}}_{\tau}^{\top}\left({\bf{I}}-{\bf{J}}\right)\prod\limits_{z=\tau+1}^{t-1}{\bf{A}}_{z}^{\top}.
    \end{aligned}
\end{equation}
The last equality is due to the assumption that all local models have the same initial parameters, thus, $E^{\left(1\right)}=0$.
Next, we derive the Frobenius norm of $E^{\left(t\right)}\left(\boldsymbol{w}\right)$ to measure the upper bound of the gap between the average posterior $P^{\star\left(t\right)}\left(\boldsymbol{w}| {\mathcal{D}}\right)$ and each local posterior $P^{\left(t\right)}\left(\boldsymbol{w}| {\mathcal{D}}\right)$ which is given by
\begin{equation}\label{the1:inequality}
    \begin{aligned}        & ||E^{\left(t+1\right)}\left(\boldsymbol{w}\right)||_{\rm{F}} \\ &\leqslant ||\sum\limits_{\tau=1}^{t}\log\left(\frac{P^{\left(\tau\right)}_{\text{l}}\left(\mathcal{D}|\boldsymbol{w}\right)}{P^{\left(\tau\right)}_{\mathcal{D}}}\right){\bf{A}}_{\tau}^{\top}\left({\bf{I}}-{\bf{J}}\right)\prod\limits_{z=\tau+1}^{t-1}{\bf{A}}_{z}^{\top}||_{\rm{F}} \\
    & \leqslant \sum\limits_{\tau=1}^{t}||{\bf{A}}_{\tau}^{\top}\left({\bf{I}}-{\bf{J}}\right)\prod\limits_{z=\tau+1}^{t}{\bf{A}}_{z}^{\top}||_{\rm{op}}||\log\left(\frac{P^{\left(\tau\right)}_{\text{l}}}{P^{\left(\tau\right)}_{\mathcal{D}}}\right)||_{\rm{F}} \\
    & \leqslant L\sqrt{M} \sum\limits_{\tau=1}^{t}||{\bf{A}}_{\tau}^{\top}\left({\bf{I}}-{\bf{J}}\right)\prod\limits_{z=\tau+1}^{t}{\bf{A}}_{z}^{\top}||_{\rm{op}}.
    \end{aligned}
\end{equation}
The second and the third inequalities are due to [Lemma 5,  \cite{DecentralizedFederatedLearningBalancingCommunicationandComputingCosts}] and Assumption 2, respectively.
Then, we need to analyze the upper bound of $||{\bf{A}}_{\tau}^{\top}\left({\bf{I}}-{\bf{J}}\right)\prod\limits_{z=\tau+1}^{t}{\bf{A}}_{z}^{\top}||_{\rm{op}}$ according to the definition of the matrix operator norm, which is given by
\begin{equation}\label{max_lambda}
    \begin{aligned} 
        & \!\!||{\bf{A}}_{\tau}^{\top}\left({\bf{I}}-{\bf{J}}\right)\prod\limits_{z=\tau+1}^{t}{\bf{A}}_{z}^{\top}||_{\rm{op}} \\ 
        & = \!\!\sqrt{\lambda_{\text{max}}\!\!\left(\!\left({\bf{A}}_{\tau}^{\top}\left({\bf{I}}-{\bf{J}}\right)\!\!\!\prod\limits_{z=\tau+1}^{t}{\bf{A}}_{z}^{\top}\right)^{\top}\!\!\!\left({\bf{A}}_{\tau}^{\top}\left({\bf{I}}-{\bf{J}}\right)\!\!\!\prod\limits_{z=\tau+1}^{t}{\bf{A}}_{z}^{\top}\right)\!\right)} \\
        & \overset{\bigtriangleup}{=} \!\! \sqrt{\lambda_{\text{max}}\!\!\left(\left({\bf{A}}_{\tau}^{\top}\left({\bf{I}}-{\bf{J}}\right){\bf{Q}}_{\tau+1}\right)\left({\bf{A}}_{\tau}^{\top}\left({\bf{I}}-{\bf{J}}\right){\bf{Q}}_{\tau+1}\right)^{\top}\right)} \\
        & = \!\! \sqrt{\lambda_{\text{max}}\!\!\left(\left({\bf{A}}_{\tau}^{\top}\left({\bf{I}}-{\bf{J}}\right){\bf{Q}}_{\tau+1}\right)\left(\left({\bf{Q}}_{\tau+1}\right)^{\top}\left({\bf{I}}-{\bf{J}}\right)^{\top}{\bf{A}}_{\tau}\right)\right)} \\
        & = \!\! \sqrt{\lambda_{\text{max}}\!\!\left(\left({\bf{I}}-{\bf{J}}\right){\bf{A}}_{\tau}^{\top}\left({\bf{Q}}_{\tau+1}\left({\bf{Q}}_{\tau+1}\right)^{\top}\right){\bf{A}}_{\tau}\left({\bf{I}}-{\bf{J}}\right)\right)} \\
        & = \!\! \sqrt{\lambda_{\text{max}}\!\!\left(\left({\bf{I}}-{\bf{J}}\right){\bf{Q}}_{\tau}{\bf{Q}}_{\tau}^{\top}\left({\bf{I}}-{\bf{J}}\right)\right)} = \sqrt{\lambda_{\text{max}}\!\!\left(\left({\bf{I}}-{\bf{J}}\right){\bf{Q}}_{\tau}{\bf{Q}}_{\tau}^{\top}\right)},
    \end{aligned}
\end{equation}
where ${\bf{Q}}_{\tau}=\prod\limits_{z=\tau}^{t}{\bf{A}}_{z}^{\top}$ with ${\bf{Q}}_{\tau}{\bf{Q}}_{\tau}^{\top}$ being a real symmetric matrix since every ${\bf{A}}_{z}$ is row random matrix. The term $\lambda_{\text{max}}\left({{\bf{C}}}\right)$ represent the the maximum eigenvector of matrix ${{\bf{C}}}$.
Then, we assume ${\bf{Q}}_{\tau}=\prod\limits_{z=\tau}^{t}{\bf{A}}_{z}^{\top}$ can be decomposed into orthogonal matrix of the eigenvectors ${\bf{P}}_{\tau}^{\top}$ and diagonal matrix of eigenvalues ${\bf{\Lambda}}_{\tau}$ which is sort in descending order.
Then, we have ${\bf{Q}}_{\tau}{\bf{Q}}_{\tau}^{\top}={\bf{P}}_{\tau}^{\top}{\bf{\Lambda}}_{\tau}{\bf{P}}_{\tau}$.
Thus, (\ref{max_lambda}) can be rewritten by
\begin{equation}
    \begin{aligned}
        \sqrt{\lambda_{\text{max}}\left(\left({\bf{I}}-{\bf{J}}\right){\bf{Q}}_{\tau}{\bf{Q}}_{\tau}^{\top}\right)} \!\! = \!\! \sqrt{\!\text{max}\!\!\left(\!{\bf{\Lambda}}_{\tau}\!\!-\!\! \diag\left\{1,0,\cdots\!,0\right\}\right)} \!\!\overset{\bigtriangleup}{=}\!\!\sqrt{\xi_{\tau}},
    \end{aligned}
\end{equation}
where $\xi_{\tau} \in\left[0,1\right]$ is the second largest eigenvalue of $\prod\limits_{z=\tau}^{t}{\bf{A}}_{z}^{\top}\left(\prod\limits_{z=\tau}^{t}{\bf{A}}_{z}^{\top}\right)^{\top}$.
The first equality is due to the fact that every row of ${\bf{J}}{\bf{Q}}_{\tau}{\bf{Q}}_{\tau}^{\top}$ is identical, hence, the rank of the matrix is $1$. 
Consequently, ${\bf{J}}{\bf{Q}}_{\tau}{\bf{Q}}_{\tau}^{\top}$ has precisely one eigenvalue equal to $1$, with all other eigenvalues being $0$.
Then, (\ref{the1:inequality}) can be rewritten as
\begin{equation}
    \begin{aligned}
    \frac{1}{M}\sum\limits_{i=1}^{M}||\log P^{\left(t+1\right)}\left(\boldsymbol{w}_{i}|\mathcal{D}_{i}\right) \!-\!&\log P^{\star\left(t+1\right)}\left(\boldsymbol{w}\right)||^{2} \\
        & \leq 
        \left(L\sqrt{M} \sum\limits_{\tau=1}^{t} \sqrt{\xi_{\tau}}\right)^{2}.
    \end{aligned}
\end{equation}
This completes the proof.
\end{proof}

\end{document}